\documentclass{article}
\usepackage{iclr2027_conference,times}

\usepackage{amsmath,amsfonts,bm}

\def\eqref#1{equation~\ref{#1}}

\def\1{\bm{1}}

\DeclareMathAlphabet{\mathsfit}{\encodingdefault}{\sfdefault}{m}{sl}
\SetMathAlphabet{\mathsfit}{bold}{\encodingdefault}{\sfdefault}{bx}{n}

\usepackage{hyperref}
\usepackage{url}
\usepackage{graphicx}
\usepackage{booktabs}
\usepackage{amsmath}
\usepackage{amssymb}
\usepackage{amsthm}
\usepackage{caption}
\usepackage{wrapfig}
\usepackage{subcaption}
\usepackage{titlesec}

\newtheorem{theorem}{Theorem}
\newtheorem{proposition}[theorem]{Proposition}
\newtheorem{corollary}[theorem]{Corollary}
\numberwithin{theorem}{section}

\titlespacing*{\section}{0pt}{0pt plus 0.2ex minus 0.1ex}{0.2ex}
\titlespacing*{\subsection}{0pt}{0pt plus 0.2ex minus 0.1ex}{0.1ex}

\title{SharedSAE: A Unified Sparse Dictionary across U-Net Blocks and Denoising Time for Diffusion Models}

\iclrfinalcopy
\author{
  Longhao Lei \\
  Hunan University \\
  \texttt{longhaolei@hnu.edu.cn}
  \And
  Lan Yao\thanks{Corresponding author.} \\
  Hunan University \\
  \texttt{yao@hnu.edu.cn}
}
\begin{document}
\maketitle
\fancyhead{}
\begin{abstract}
Concept representations in diffusion models drift across network depth and denoising time, yet existing sparse autoencoders offer only per-block or per-timestep dictionaries, leaving no single feature dictionary for localization and intervention in the network. We present \textbf{Shared Sparse Autoencoder (SharedSAE)}, which learns one sparse dictionary shared across four U-Net attention blocks and all 50 denoising steps of SDXL 1.0. A block-specific low-rank input adapter aligns the residual distribution of each block before the shared encoder, and a temporal conditioning module with a shared timestep branch and a block-timestep correction branch absorbs temporal non-stationarity. Despite spanning both axes, SharedSAE preserves a coherent dictionary structure. Built on the shared dictionary, localized feature suppression reduces the number of NudeNet-positive generations in the I2P detector-positive subset from 139 to 1. Tracing sparse features across the trajectory further reveals that concept representations are redistributed across U-Net blocks during generation, shifting from the earliest downsampling block toward intermediate and later blocks as denoising proceeds. SharedSAE thus provides a unified, interpretable, and intervention-ready representation space for diffusion models, enabling concept analysis and erasure at any covered block and denoising step.
\end{abstract}

\setlength{\abovedisplayskip}{2pt plus 1pt minus 1pt}
\setlength{\belowdisplayskip}{2pt plus 1pt minus 1pt}
\setlength{\abovedisplayshortskip}{1pt}
\setlength{\belowdisplayshortskip}{1pt plus 1pt minus 1pt}

\section{Introduction}

\begin{wrapfigure}{r}{0.6\columnwidth}
\vspace{-2pt}
    \centering
    \includegraphics[width=\linewidth]{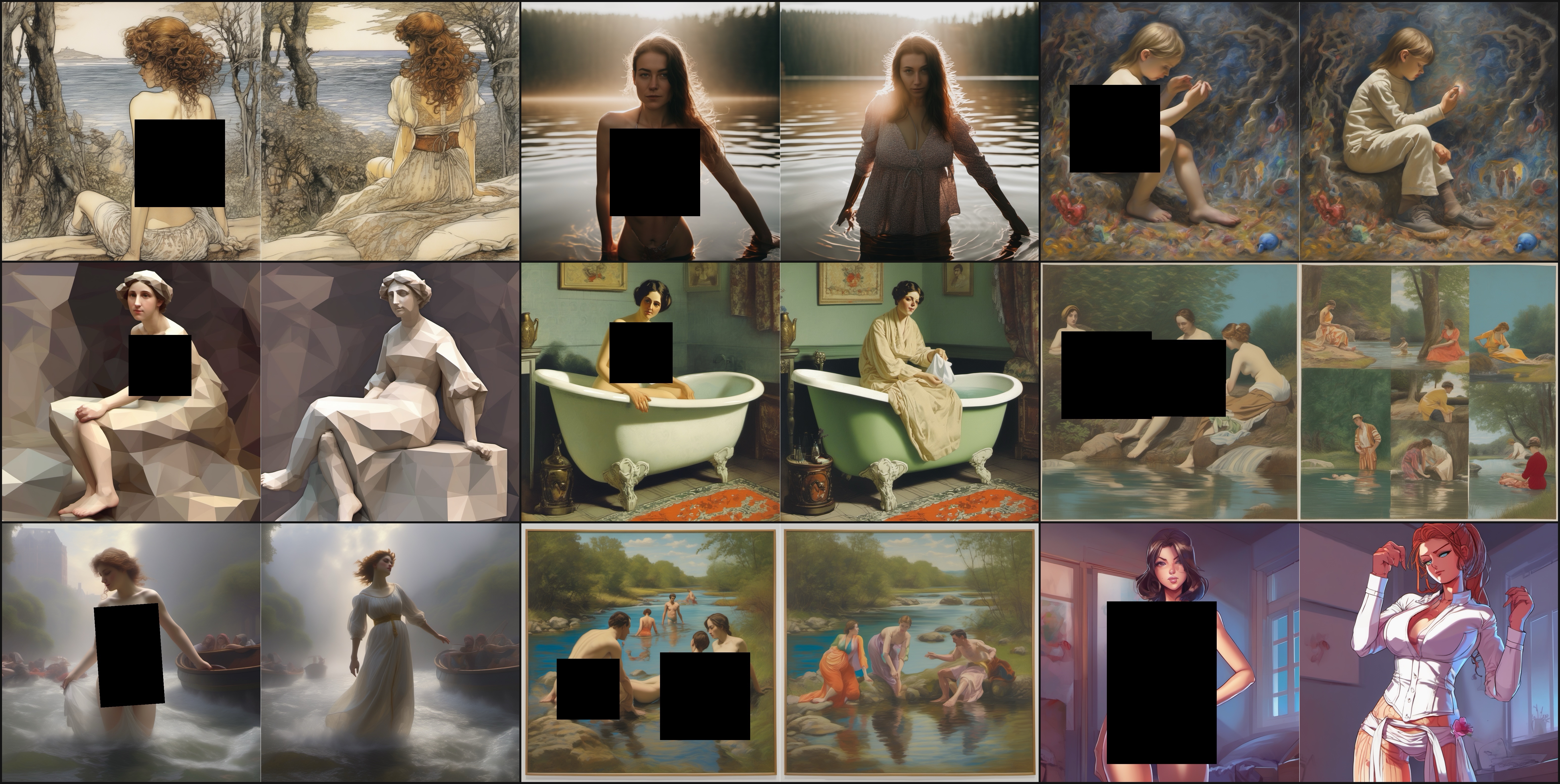}
    \caption{Nudity erasure results. For each case, the original generation is shown on the left and the erased result on the right. Exposed regions are masked for visualization.}
    \label{fig:nudity_i2p_erasure}
\vspace{-2pt}
\end{wrapfigure}

Understanding how semantic concepts are represented and transformed inside diffusion models is important for both model interpretation and controllable generation. Unlike feed-forward vision models, diffusion models generate images through an iterative denoising process, during which internal representations evolve across both network depth and denoising time \citep{ho2020ddpm,rombach2022ldm,tinaz2025emergence}. A concept such as an object or attribute may therefore not be confined to a single block or a fixed stage of generation, but instead vary dynamically across U-Net blocks and denoising timesteps. Characterizing such dynamics requires an interpretable feature space that remains comparable across blocks and timesteps, allowing concept representations to be analyzed consistently throughout the generation trajectory.

Sparse autoencoders (SAEs) provide a natural tool for constructing such a feature space \citep{surkov2025onestep,cywinski2025saeuron}. Given an intermediate activation $h$, an SAE writes it as a sparse combination of learned atoms, $h \approx \sum_{\rho} s_{\rho}(x) f_{\rho}$, where each atom $f_{\rho}$ is a decoder direction and $s_{\rho}(x)$ is the activation of the corresponding sparse feature. The resulting sparse latent space can be viewed as an interpretable feature dictionary in which each atom captures one latent direction in the intermediate activation space. 

In diffusion or flow-based models, directional representations further perturb the subsequent generative dynamics by $\Delta h = \sum_{\rho}\Delta s_{\rho} f_{\rho}$. Thus, the feature dictionary learned by an SAE not only provides a way to decompose and localize latent semantic structure, but also offers a natural interface for steering generation along selected feature directions \citep{surkov2025onestep,cywinski2025saeuron}.

\begin{wrapfigure}{R}{0.65\textwidth}
\vspace{-2pt}
\centering
\includegraphics[width=\linewidth]{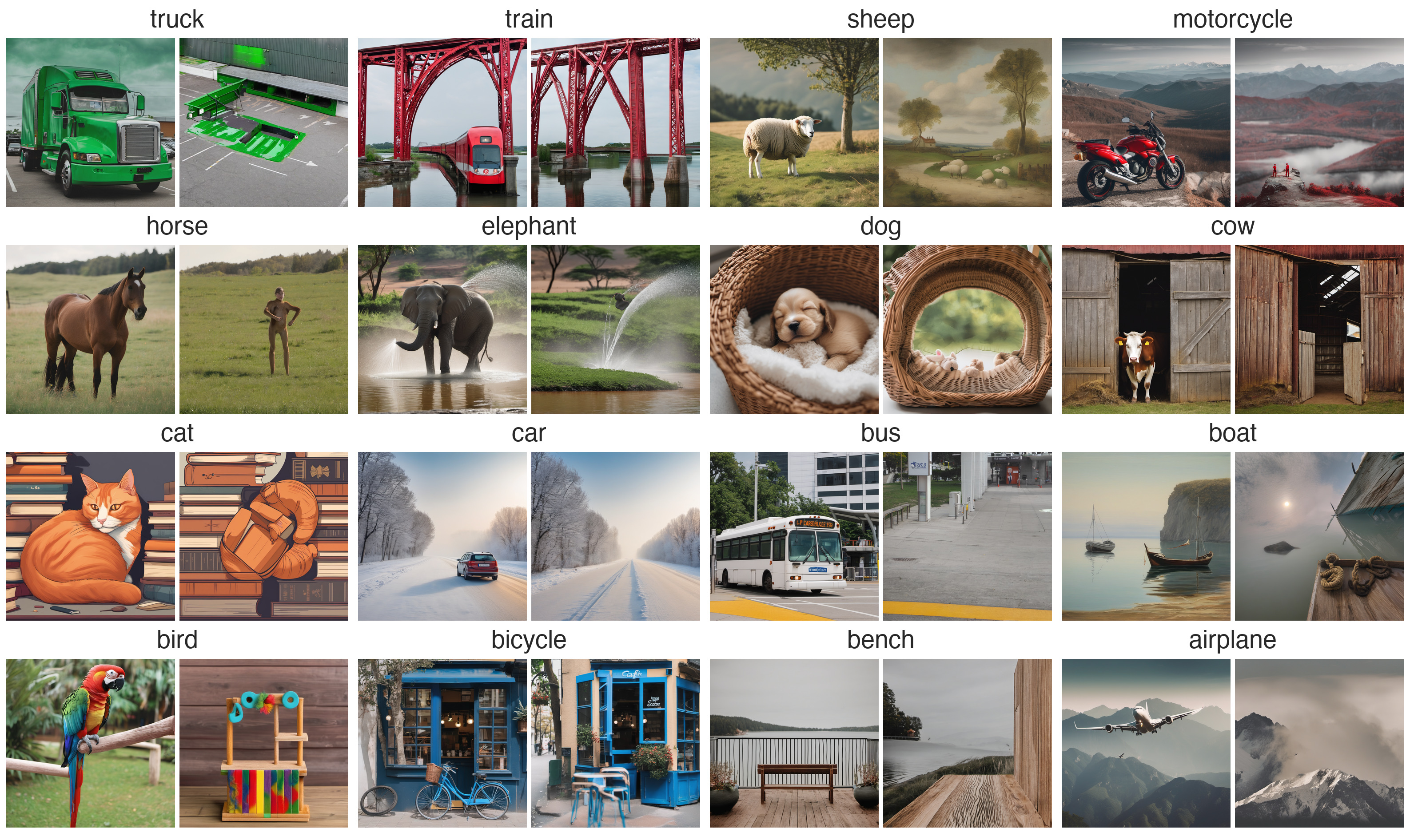}
\caption{General concept erasure results. For each case, the original generation is shown on the left and the erased result on the right.}
\label{fig:sharedsae_erasure}
\vspace{-2pt}
\end{wrapfigure}

Existing SAE designs require multiple models to cover diffusion representations across network depth. Training an independent SAE for each block \citep{surkov2025onestep} creates a bank of block-specific encoders, decoders, and feature dictionaries. This organization supports local interpretation, but it fragments concept representations across incompatible dictionaries and increases the cost of training, storing, and maintaining the analysis system. SAEs trained in the one-step setting of SDXL-Turbo further omit temporal evolution from their training data \citep{surkov2025onestep}. Directly pooling activations from different blocks and timesteps into a static SAE avoids multiple dictionaries, but leaves the cross-block heterogeneity and temporal distribution shifts identified by \citet{huang2026tide} unresolved.

Replacing this SAE bank with one model requires a shared feature space that remains stable across blocks and timesteps while adapting to the different activation distributions encountered at each position. Existing diffusion SAE formulations address either block-specific structure or temporal variation within a block \citep{surkov2025onestep,cywinski2025saeuron,huang2026tide}, leaving the two axes disconnected.

We introduce \textbf{Shared Sparse Autoencoder (SharedSAE)}, one sparse autoencoder shared across U-Net attention blocks and denoising timesteps in place of multiple block-wise SAEs. Its encoder and decoder, whose columns form a single dictionary, support reconstruction, localization, and intervention at every evaluated block and timestep. Block-specific low-rank adapters, a temporal conditioning module, and cross-block alignment accommodate differences in activation distributions across blocks and time.

In this shared space, concept-related activations shift from early downsampling blocks toward intermediate and later blocks as denoising proceeds, with representative features exhibiting cross-block succession.

Our main contributions are as follows:
\begin{itemize}
\item \textbf{One dictionary unifies U-Net block and denoising time.} A single encoder and decoder cover four attention blocks and the full SDXL 1.0 trajectory. Reconstruction improves on the block-wise alternative, achieving an FVU of 0.400 versus 0.492 while using $28.1$M rather than $104.9$M parameters.
\item \textbf{One localized feature set erases object, nudity, and style concepts.} On the I2P-142 nudity subset, SharedSAE leaves only 1 NudeNet-positive generation.
\item \textbf{A tool for analyzing diffusion intermediate representations.} Tracing the shared dictionary across the denoising trajectory reveals several phenomena, including concept substitution and residual concepts after erasure, feature sharing among semantically related concepts, the handoff of representative features across blocks, and distributed representations of style concepts.

\end{itemize}

\section{Related Work}
\subsection{Sparse Autoencoders for Diffusion Models}

Early work learns an independent sparse dictionary at each fixed network position. One-Step is Enough \citep{surkov2025onestep} trains separate SAEs on the residual updates of several U-Net attention blocks in SDXL-Turbo, while Emergence \citep{tinaz2025emergence} trains separate SAEs for different block and timestep combinations to study how concepts evolve throughout denoising. Later methods tie denoising time to a single dictionary. SAeUron \citep{cywinski2025saeuron} aggregates the activations of one block across denoising steps into a single SAE, so a single dictionary covers the full trajectory, yet timestep is not an explicit conditioning input. TIDE \citep{huang2026tide} goes further by feeding timestep-dependent scale and shift into the SAE input to explicitly model temporal distribution shift.

Other diffusion SAEs improve the semantic specificity of features learned at a fixed representation site. SAEmnesia \citep{cassano2026saemnesia} adds concept supervision to the reconstruction objectives, encouraging target concepts to concentrate in a smaller set of features. PEAK \citep{jiang2026peak} trains a k-SAE on denoiser activations and localizes concept-specific features by contrasting target and non-target prompts. These methods demonstrate that sparse features can support precise concept localization and intervention. The SAE approach is a promising method for interpreting the internal mechanisms of diffusion models.

Existing methods therefore cover network depth with multiple independent dictionaries or cover denoising time within a single block. Block-wise dictionaries fit local activation distributions but lose cross-position feature identity, while timestep-shared dictionaries preserve temporal identity without resolving cross-block heterogeneity. SharedSAE unifies both axes by learning one dictionary across multiple blocks and the full denoising trajectory.

\subsection{Concept Erasure}

Concept erasure suppresses a target concept while preserving generation of unrelated content. Parameter-editing methods encode the erasure persistently in the model. ESD \citep{gandikota2023esd} fine-tunes with negative guidance, UCE \citep{gandikota2024uce} applies closed-form updates to the cross-attention projection, MACE \citep{lu2024mace} couples cross-attention refinement with LoRA for multiple concepts, and SPACE \citep{novello2026space} updates cross-attention parameters with sparse closed-form rules. These methods require no additional control during generation, but adding a new target generally entails another model update.

Inference-time methods instead freeze the model and modify concept-related representations during denoising. SLD \citep{schramowski2023sld} introduces safety guidance, AdaVD \citep{wang2025adavd} removes concept directions from the cross-attention value space, CASteer \citep{gaintseva2025casteer} adds concept-specific steering vectors to attention outputs, and GLoCE \citep{lee2025gloce} rewrites representations with a gated low-rank map. These methods can switch targets without updating the diffusion model, but their control units are typically dense directions or transformations whose semantic composition is difficult to inspect.

Sparse-feature intervention provides a more interpretable alternative to dense control. SharedSAE first localizes the features associated with a target concept in the shared dictionary and then suppresses their activations across blocks and timesteps. Because every block is encoded in the same dictionary, localization runs at different blocks yield feature sets in one shared vocabulary, so the selected features can be compared and reused directly across the covered positions.


\section{Method}

\subsection{Problem Setup and Residual Trajectory Representation}

We freeze SDXL 1.0 and generate images at $1024\times1024$ resolution using a 50-step Euler sampler. For a target block $b$, denoising timestep $t$, and spatial token $i$, let $h^{\mathrm{in}}$ and $h^{\mathrm{out}}$ denote the input and output activations of the block, respectively. The block's incremental contribution to the current denoising trajectory is given by the residual
\begin{equation}
x = h^{\mathrm{out}} - h^{\mathrm{in}}.
\end{equation}
Unless otherwise specified, we omit the indices $b,t,i$ for clarity. By default, SharedSAE is trained on four attention blocks: \texttt{down.2.1}, \texttt{mid.0}, \texttt{up.0.0}, and \texttt{up.0.1}, all with the same shape and residual dimension $d=1280$. We use denoising time for the continuous time axis of the diffusion process, denoising timestep $t$ for the time condition consumed by the denoiser, and denoising step for a single update of the sampler, so that a full run forms a denoising trajectory. Prompt-driven updates constrain the denoising trajectory via skip connections and attention blocks. Compared with activations, residuals more directly characterize the update introduced by each block.
Residual activations from different blocks can differ substantially in scale. Directly sharing a dictionary across blocks would therefore bias learning toward blocks exhibiting larger activation norms. SharedSAE estimates the mean token norm $\mu_b$ for each block and defines the corresponding per-coordinate residual scale $\sigma_b$. Residuals are divided by this scale before entering the shared feature space:
\begin{equation}
\sigma_b = \frac{\max(\mu_b,\epsilon)}{\sqrt{d}},
\qquad
\tilde{x} = \frac{x}{\sigma_b}.
\label{eq:block_normalization}
\end{equation}
\subsection{Shared Sparse Autoencoder}

\begin{wrapfigure}{r}{0.7\textwidth}
\vspace{-2pt}
\centering
\includegraphics[width=\linewidth]{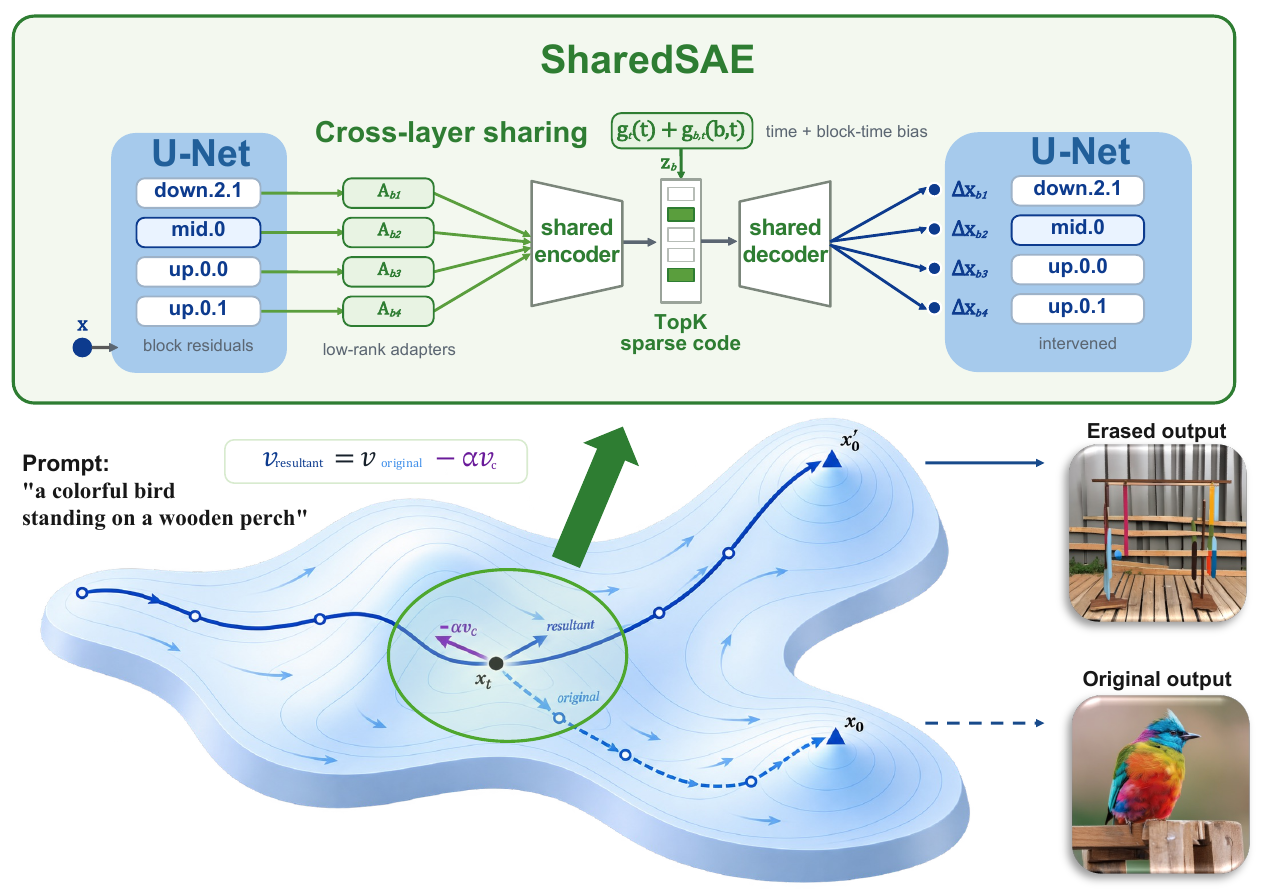}
\caption{Overview of SharedSAE. Residuals from four blocks and all denoising steps are aligned, encoded in one sparse dictionary, and decoded for reconstruction or concept intervention.}
\label{fig:overview}
\vspace{-5pt}
\end{wrapfigure}

Figure~\ref{fig:overview} presents an overview of SharedSAE. SharedSAE uses a single encoder $E:\mathbb{R}^d\to\mathbb{R}^m$ and decoder $D:\mathbb{R}^m\to\mathbb{R}^d$ across the four blocks mentioned above and the full denoising trajectory, and the columns of the decoder form the shared dictionary. The dictionary requires an overcomplete representation to capture all semantics expressed across blocks and timesteps. Its dimension $m$ should be substantially larger than the input dimension $d$. SharedSAE adopts an expansion factor of $8 \times$, yielding $m = 8d = 10240$.

The core of SharedSAE is that the four blocks share one dictionary. A block-specific low-rank input adapter, the block adapter, is placed before the encoder to align the distribution of each block with a unified sparse latent space:
\begin{equation}
A_b(x)=x+\frac{\alpha_A}{r}U_bV_bx,
\end{equation}
where the rank is $r=16$ and the adapter scale is $\alpha_A=16$.

SharedSAE further aligns feature representations across timesteps to reduce dependence on stage-specific denoising statistics. A temporal conditioning module supplies two branches:
$$
u=E(A_b(\tilde{x})-\beta_x)+\beta_z+\phi_t(t)+\phi_{b,t}(b,t),
$$
where $\beta_x$ and $\beta_z$ are learnable biases. The shared timestep branch $\phi_t$ projects a sinusoidal timestep embedding into a feature-wise bias shared across blocks, while the block-timestep correction branch $\phi_{b,t}$ uses the block and timestep embeddings to model a small block-specific correction. Both branches are initialized at zero to introduce temporal conditioning gradually, and $\phi_{b,t}$ is regularized to keep the shared temporal component dominant.

Activations are passed through ReLU followed by TopK sparsification to obtain the sparse code, which is then decoded to reconstruct the normalized residual:
\begin{align}
z=\operatorname{TopK}(\operatorname{ReLU}(u),K), \qquad
\hat{\tilde{x}}&=D(z)+\beta_x,
\end{align}
where $K=20$ and the decoder columns, which serve as dictionary atoms, are kept at unit norm.

\subsection{Training Objective}

SharedSAE is trained on 10k prompts from COCO-30k \citep{lin2014coco}. We use Adam with $\beta_1=0.9$ and $\beta_2=0.999$, with learning rates of $10^{-4}$ for the shared and temporal branches and $2\times10^{-4}$ for the block adapters.
The overall training objective consists of four terms:
\begin{equation}
\begin{aligned}
\mathcal{L}
= {}& \mathcal{L}_{\mathrm{rec}}
+\lambda_{\mathrm{aux}}\mathcal{L}_{\mathrm{aux}}
+\lambda_{\mathrm{align}}\mathcal{L}_{\mathrm{align}}
+\lambda_{b,t}\mathcal{L}_{b,t}.
\end{aligned}
\label{eq:train_objective}
\end{equation}
The primary objective is a mean squared error (MSE) loss for reconstruction.

Sparse training may leave a subset of dictionary atoms persistently inactive. To mitigate such dead features and improve dictionary utilization, we adopt the commonly used AuxK objective, which encourages currently inactive features to participate in reconstructing the residual error left by the main reconstruction branch.

The block adapter handles block-specific input distributions before the encoder. Beyond it, two terms constrain the shared dictionary from complementary perspectives:
\begin{equation}
\mathcal{L}_{\mathrm{align}}=
\frac{1}{|\mathcal{B}|-1}
\sum_{b\neq b_{\mathrm{mid}}}
\left|
\bar{z}_b-\bar{z}_{b_{\mathrm{mid}}}
\right|_2^2,
\qquad
\mathcal{L}_{b,t}=
\mathbb{E}_{b,t}
\left[
|\phi_{b,t}(b,t)|_2^2
\right].
\end{equation}
$\mathcal{L}_{\mathrm{align}}$ acts on TopK sparse codes and encourages different blocks to occupy compatible regions of the shared latent space. The correction term averages the squared norm of the block-timestep correction $\phi_{b,t}(b,t)$ over sampled blocks and timesteps. Their weights are $\lambda_{\mathrm{align}}=0.1$, $\lambda_{b,t}=3\times10^{-4}$, and $\lambda_{\mathrm{aux}}=0.03125$.

\subsection{Concept Localization}

For a target concept $c$, we build a positive prompt set $P_c^+$ and a matched negative set $P_c^-$ that differ only in the concept, encode their residual activations with SharedSAE, and record mean activations per feature and per block. Localization runs per block inside the shared dictionary, and we use the feature importance score from SAeUron \citep{cywinski2025saeuron}, averaging the normalized activation contrast over the selected denoising timesteps to obtain the localization score of feature $j$ at block $b$:
\begin{equation}
S_{j,b}=
\frac{1}{|\mathcal{T}|}
\sum_{t\in\mathcal{T}}
\left(
\frac{\mu_{j,b}^+(t)}
{\sum_k\mu_{k,b}^+(t)+\delta}-
\frac{\mu_{j,b}^-(t)}
{\sum_k\mu_{k,b}^-(t)+\delta}
\right),
\end{equation}
where $\mu_{j,b}^\pm(t)$ is feature $j$'s mean activation at timestep $t$ in block $b$ over the positive or negative set. Before selecting the top-$k_{\mathrm{loc}}$ features, we blacklist directions whose activation frequency in that block, estimated on 3,000 held-out prompts, exceeds $0.9$, so that generic high-frequency directions do not dominate localization. We use $k_{\mathrm{loc}}=10$, the number of features used by the intervention.

\subsection{Concept Erasure}

Concept erasure proceeds in two steps. First, for the block $b$ currently being edited, we encode the current residual and combine the feature set $\mathcal{F}_{c,b}$ localized for concept $c$ at that block into a target direction $r_c(b,t)$ in the normalized residual space. Second, we map this direction back to the original block output space and subtract it from the block output.
The target direction is constructed as
\begin{equation}
r_c(b,t)=
\sum_{j\in\mathcal{F}_{c,b}}
z_j g_j w_j(b,t) D_j,
\end{equation}
where $z_j$ is the sparse activation of feature $j$ at the current token, determining how strongly the feature is currently expressed; $D_j$ is the corresponding decoder direction, that is, a dictionary atom; $w_j(b,t)$ is a temporal weight read from the temporal conditioning module at the current block and timestep and rescaled over the trajectory to a mean of one; $g_j$ is a rank-based feature weight that assigns larger intervention weights to higher-ranked features. For a feature ranked $r$ among $n$ localized features, we set $g_j=\frac{n-r+1}{n}$.

The target direction is then subtracted from the block output:
\begin{equation}
h^{\mathrm{out}}
\leftarrow
h^{\mathrm{out}}-
\lambda_{\mathrm{erase}} \sigma_b r_c(b,t),
\end{equation}
where multiplication by $\sigma_b$ is the inverse of the normalization in Eq.~(\ref{eq:block_normalization}) and maps $r_c(b,t)$ from the normalized residual space back to the original block output space. The coefficient $\lambda_{\mathrm{erase}}$ controls the global erasure strength. We use $\lambda_{\mathrm{erase}}=10$ for general concept erasure and $\lambda_{\mathrm{erase}}=5$ for I2P nudity erasure. We cap the mean absolute per-step perturbation ratio at $\delta_{\max}=0.2$.

Geometrically, the SAE decomposes each U-Net residual into a sparse set of hidden-state directions. A decoder direction is not itself a diffusion velocity. It becomes a local velocity direction after being pushed through the Jacobian of the remaining U-Net and converted from noise prediction to the probability-flow ODE field. Consequently, suppressing a localized sparse coefficient changes the reverse-SDE drift and its equivalent deterministic ODE velocity along the same pushed-forward direction. Figure~\ref{fig:overview} depicts this induced change as $v_c$. Appendix derives the SDE--ODE conversion, the sparse velocity decomposition, and its accumulated effect on the denoising trajectory.


\section{Experiments}
\begin{table}[t]
\caption{NudeNet detections by exposure category on the I2P-142 nudity subset at a detection threshold of 0.6 \citep{nudenet,schramowski2023sld}. The category columns count detections per exposure category, whereas Total counts NudeNet-positive generations, i.e., generated images with at least one exposure category detected above the threshold, and Ratio is Total divided by the number of generated images. Published baseline values are taken from \citet{novello2026space}.}
\label{tab:nudity_compare}
\centering
\resizebox{\textwidth}{!}{%
\begin{tabular}{lrrrrrrrrrr}
\toprule
Method & Breast F & Genitalia F & Breast M & Genitalia M & Buttocks & Feet & Belly & Armpits & Total $\downarrow$ & Ratio $\downarrow$ \\
\midrule
SDXL (original output)
& 21 & 1 & 7 & \textbf{0}
& 2 & 8 & 34 & 23
& 59 & 41.55\% \\
SPEED \citep{li2026speed}
& 16 & 1 & 7 & \textbf{0}
& 2 & 4 & 30 & 27
& 51 & 35.92\% \\
UCE \citep{gandikota2024uce}
& 13 & \textbf{0} & 5 & \textbf{0}
& \textbf{0} & 6 & 15 & 15
& 37 & 26.06\% \\
RECE \citep{gong2024rece}
& 10 & \textbf{0} & 2 & \textbf{0}
& \textbf{0} & 5 & 7 & 10
& 23 & 16.20\% \\
SAFREE \citep{yoon2025safree}
& 5 & \textbf{0} & 3 & \textbf{0}
& \textbf{0} & \textbf{1} & 10 & 9
& 22 & 15.49\% \\
SPACE \citep{novello2026space}
& 3 & \textbf{0} & 2 & \textbf{0}
& \textbf{0} & 2 & 9 & 6
& 16 & 11.27\% \\
\midrule
\textbf{SharedSAE (I2P-142 nudity subset)}
& \textbf{0} & \textbf{0} & \textbf{0} & \textbf{0}
& \textbf{0} & \textbf{1} & \textbf{0} & \textbf{0}
& \textbf{1} & \textbf{0.70\%} \\
\textbf{SharedSAE (139 I2P detector-positive cases)}
& \textbf{0} & \textbf{0} & \textbf{0} & \textbf{0}
& \textbf{0} & \textbf{1} & \textbf{0} & \textbf{0}
& \textbf{1} & \textbf{0.72\%} \\
\bottomrule
\end{tabular}}
\end{table}
\begin{table}[t]
\centering
\begin{minipage}[t]{0.53\textwidth}
\vspace{0pt}
\centering
\caption{Representation quality.}
\label{tab:repr_quality}
\scriptsize
\setlength{\tabcolsep}{2pt}
\renewcommand{\arraystretch}{1.12}
\begin{tabular}{@{}lcccc@{}}
\toprule
& \multicolumn{2}{c}{FVU $\downarrow$} & \multicolumn{2}{c}{Effective rank $\uparrow$} \\
\cmidrule(lr){2-3} \cmidrule(lr){4-5}
    Block & SharedSAE & SAE-Turbo & SharedSAE & SAE-Turbo \\
\midrule
\texttt{down.2.1} & \textbf{0.403} & 0.557 & \textbf{433} & 109 \\
\texttt{mid.0}    & \textbf{0.455} & 0.533 & \textbf{433} & 251 \\
\texttt{up.0.0}   & \textbf{0.364} & 0.458 & \textbf{433} & 183 \\
\texttt{up.0.1}   & \textbf{0.379} & 0.420 & \textbf{433} & 141 \\
\midrule
    Average & \textbf{0.400} & 0.492 & \textbf{433} & 171 \\
\bottomrule
\end{tabular}
\end{minipage}%
\hfill
\begin{minipage}[t]{0.45\textwidth}
\vspace{0pt}
\centering
\caption{Localization stability.}
\label{tab:loc_stability}
\scriptsize
\setlength{\tabcolsep}{2pt}
\renewcommand{\arraystretch}{0.82}
\begin{tabular*}{\linewidth}{@{\extracolsep{\fill}}lcc@{}}
\toprule
    Perturbation & Shared & Turbo \\
\midrule
    Prompt split      & \textbf{0.811} & 0.803 \\
    Cross-seed        & 0.525 & \textbf{0.737 }\\
Window shift & 0.735 & n/a   \\
\bottomrule
\end{tabular*}

\captionsetup{skip=2pt}
\caption{Model cost.}
\label{tab:model_cost}
\setlength{\tabcolsep}{2pt}
\begin{tabular*}{\linewidth}{@{\extracolsep{\fill}}lcc@{}}
\toprule
Method & SAEs $\downarrow$  & Params $\downarrow$ \\
\midrule
SAE-Turbo & 4 & 104.9M \\
\textbf{SharedSAE} & \textbf{1} & \textbf{28.1M} \\
\bottomrule
\end{tabular*}
\end{minipage}

\begin{minipage}{\linewidth}
\footnotesize
\textit{Note.} For Table~\ref{tab:repr_quality} (Representation quality), SAE-Turbo is the published block-wise independent SAE, run under its native
setting of one denoising step with $\mathrm{CFG}=0$, on the same four blocks. SharedSAE reuses one dictionary at all four blocks and therefore has a single
effective rank, whereas SAE-Turbo trains one dictionary per block, whose four effective-rank values
average to 171. For Table~\ref{tab:loc_stability} (Localization stability), we report the Jaccard overlap of the top-$k_{\mathrm{loc}}$ localized features, averaged over eight concepts.
Both methods are evaluated using the same prompt pairs and seeds. SAE-Turbo generates in a
single denoising step, so it has no time-window row.
\end{minipage}
\end{table}

\subsection{Representation Quality}
\begin{wrapfigure}{r}{0.50\textwidth}
\vspace{-2pt}
\centering
\includegraphics[width=\linewidth]{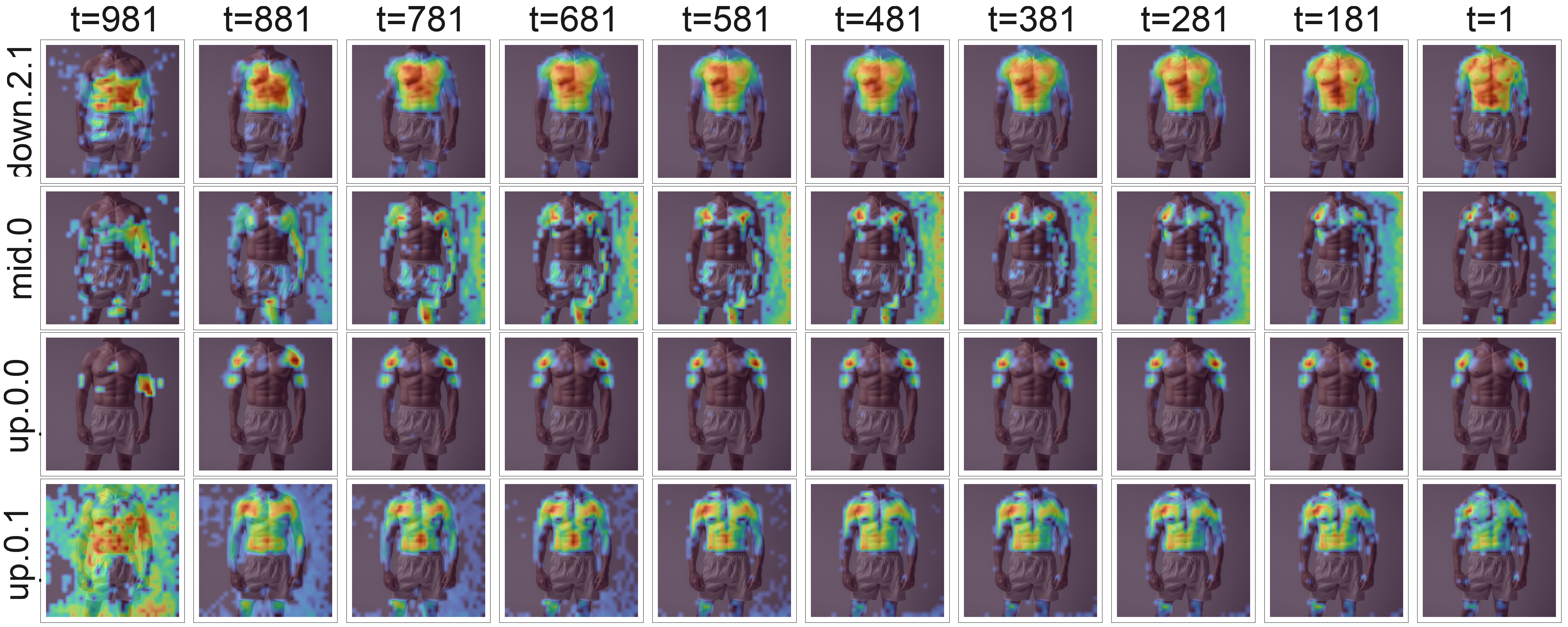}
\caption{Nudity-localized activations concentrate on exposed body regions, with their centers appearing lower in \texttt{down.2.1} and higher in the up blocks.}
\label{fig:nudity_spatial}
\vspace{-2pt}
\end{wrapfigure}

We evaluate reconstruction quality using the fraction of variance unexplained (FVU), computed in the normalized residual space that the encoder consumes. It is the ratio between the squared reconstruction error and the centered variance of the normalized residual:
\begin{equation}
\mathrm{FVU}=
\frac{\sum \lVert \hat{\tilde{x}}-\tilde{x}\rVert^2}
{\sum \lVert \tilde{x}-\bar{\tilde{x}}\rVert^2}.
\end{equation}
Lower FVU indicates better reconstruction of residual variance. We evaluate FVU on 500 held-out prompts, with the results shown in Table~\ref{tab:repr_quality}.

SharedSAE achieves an average FVU of 0.400 across four blocks, explaining approximately $60\%$ of the residual variance. The block-wise independent SAEs trained under the native SAE-Turbo setting \citep{surkov2025onestep} reach 0.492 on the same prompts and the same dictionary width, so the shared dictionary explains more of the residual variance while covering four blocks and the full trajectory.

\subsection{Model Cost}

Table~\ref{tab:model_cost} compares the cost of covering four U-Net blocks with a dictionary width of $10{,}240$. A block-wise approach trains four independent SAEs, totaling $104.9$M parameters, whereas SharedSAE uses a single shared encoder-decoder with lightweight conditioning to cover all blocks and timesteps with only $28.1$M parameters, reducing the parameter footprint by $73.3\%$. By learning a shared dictionary across blocks and timesteps, SharedSAE also avoids repeated feature filtering, concept localization, and SAE retraining, enabling the same features to be reused across all covered blocks and timesteps for erasure.

\subsection{Localization Stability}

Concept localization produces a ranked feature set for each target concept and block, which is then used for intervention. We evaluate ranking stability on eight object concepts under prompt split, random seed variation, and time-window shift, using the Jaccard overlap between the resulting top-$k_{\mathrm{loc}}$ sets. Prompt-split stability compares rankings from two random halves of the controlled prompt pairs, cross-seed stability uses the same prompts with different random seeds, and time-window stability shifts the localization window.

SharedSAE remains stable under prompt splitting and time-window shifts, but is more sensitive to random seed variation than SAE-Turbo. This difference likely arises because SharedSAE localizes features over a 50-step trajectory, allowing seed-dependent stochastic variation to accumulate across the denoising path, whereas SAE-Turbo uses a single denoising step. We therefore average localization statistics over multiple seeds in all experiments.

\subsection{Architecture Ablations}
\begin{wraptable}{R}{0.6\textwidth}
\vspace{-2pt}
\caption{Architecture ablations of SharedSAE. }
\label{tab:architecture_ablation}
\centering
\footnotesize
\setlength{\tabcolsep}{3.5pt}
\resizebox{\linewidth}{!}{%
\begin{tabular}{llcc}
\toprule
Variant & Diagnostic & Ablated & SharedSAE \\
\midrule
w/o block adapter & effective rank $\uparrow$ & 187 & \textbf{433} \\
w/o temporal conditioning & time-window Jaccard $\uparrow$ & 0.720 & \textbf{0.735} \\
\bottomrule
\end{tabular}}
\vspace{-2pt}
\end{wraptable}

We isolate the roles of the block adapter and the temporal conditioning module by retraining SharedSAE with each component removed. Table~\ref{tab:architecture_ablation} reports a component-specific representation diagnostic, while Figure~\ref{fig:intervention_handoff}\subref{fig:erasure_architecture_ablation} shows how the two ablations affect concept erasure. In the horse example, removing the block adapter leaves the target visibly recognizable, indicating incomplete suppression. Removing temporal conditioning suppresses the target more strongly but introduces a conspicuous residual artifact. The full SharedSAE model removes the horse while producing a more coherent replacement. The train example is less challenging, and all three variants suppress the target under the same prompt and seed.

Removing the block adapter collapses the effective rank of the shared decoder, from 433 to 187. Effective rank is the participation ratio of the singular-value spectrum of the normalized decoder directions. It counts how many independent directions the $10{,}240$ columns actually span, so a lower value means the learned directions have become mutually redundant. The block adapter therefore keeps the dictionary high-rank rather than improving the fit.

Removing temporal conditioning lowers time-window stability, from 0.735 to 0.720. The gap is modest and indicates that the temporal component contributes to how consistently a concept's features are localized across denoising steps.

\subsection{Concept Erasure}

\begin{figure*}[t]
\centering
\begingroup
\captionsetup[subfigure]{skip=0pt}
\begin{minipage}[t]{0.68\textwidth}
\centering
\includegraphics[width=\linewidth]{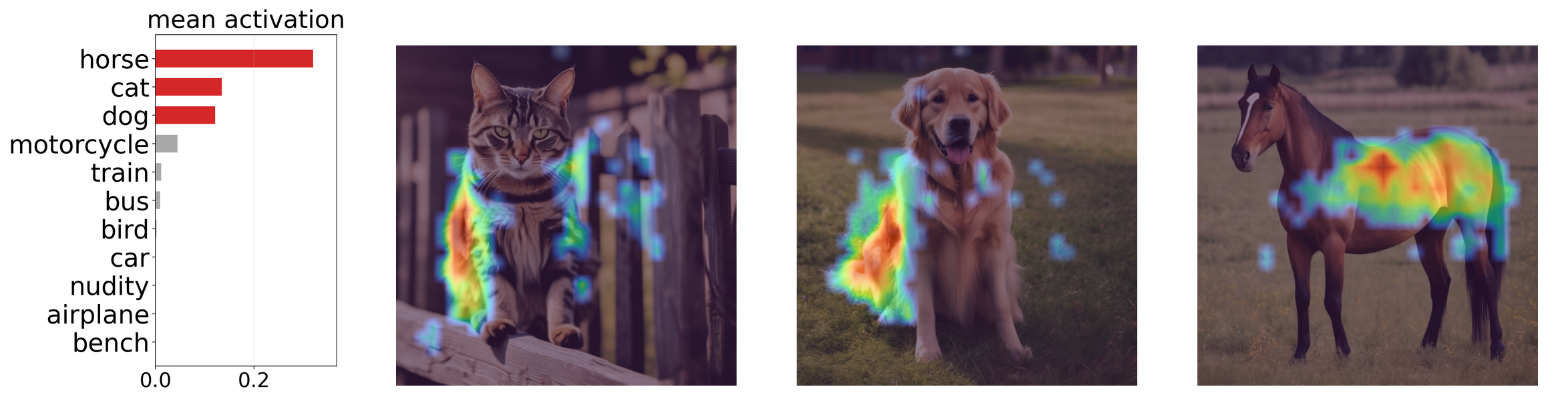}
\subcaption{Activation of feature 6223.}
\label{fig:shared_feature}
\end{minipage}%
\begin{minipage}[t]{0.3\textwidth}
\centering
\includegraphics[width=\linewidth]{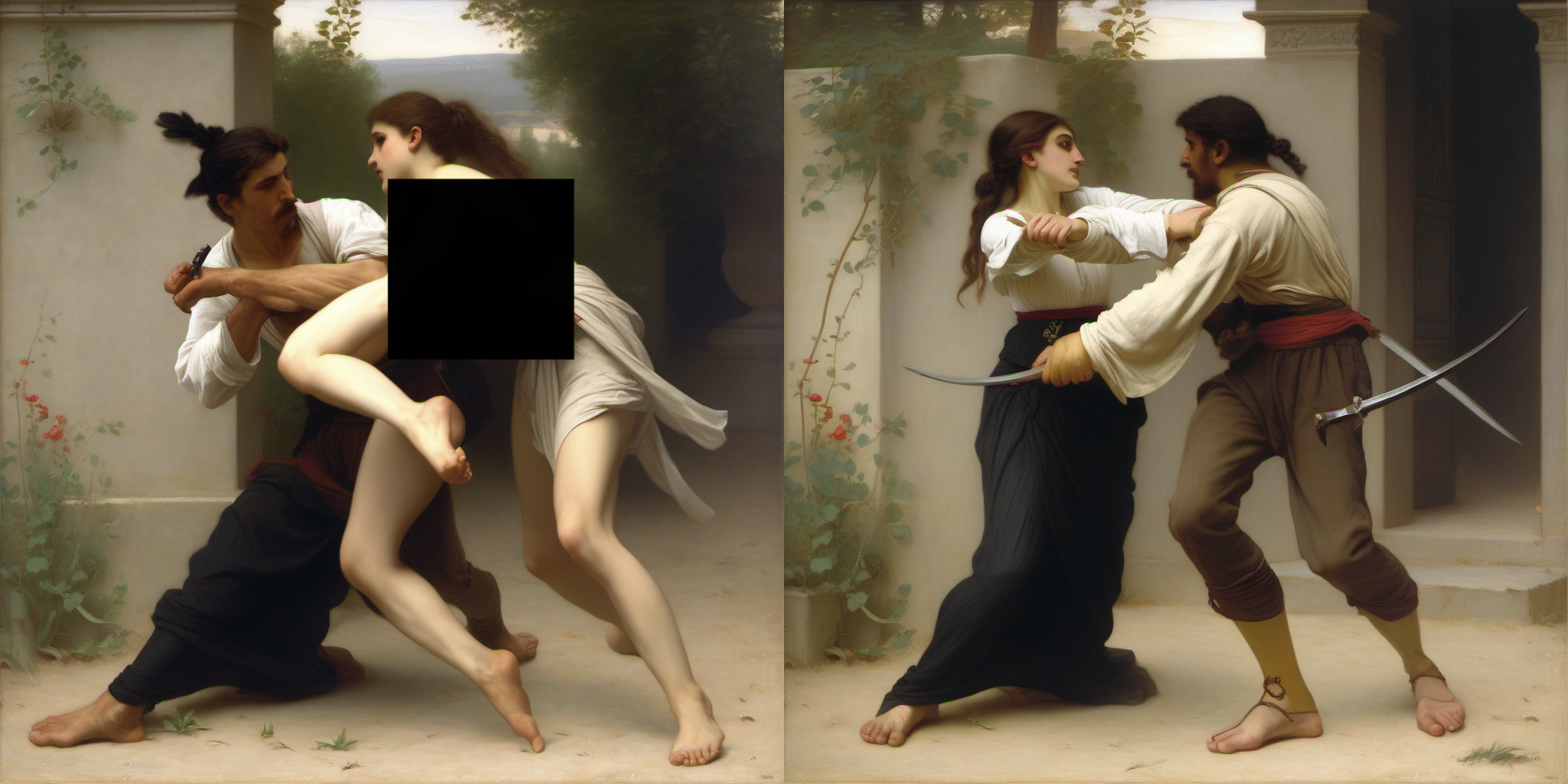}
\subcaption{Comparison for I2P case 29.}
\label{fig:nudity_residual_case}
\end{minipage}
\caption{(a) Feature 6223 responds strongly to horse, cat, and dog, with spatial activation concentrated on the quadruped body. (b) I2P case 29 retains a \texttt{FEET\_EXPOSED} detection after erasure.}
\label{fig:shared_feature_and_residual}
\endgroup
\begin{minipage}[t]{0.55\linewidth}
    \centering
    \includegraphics[width=\linewidth]{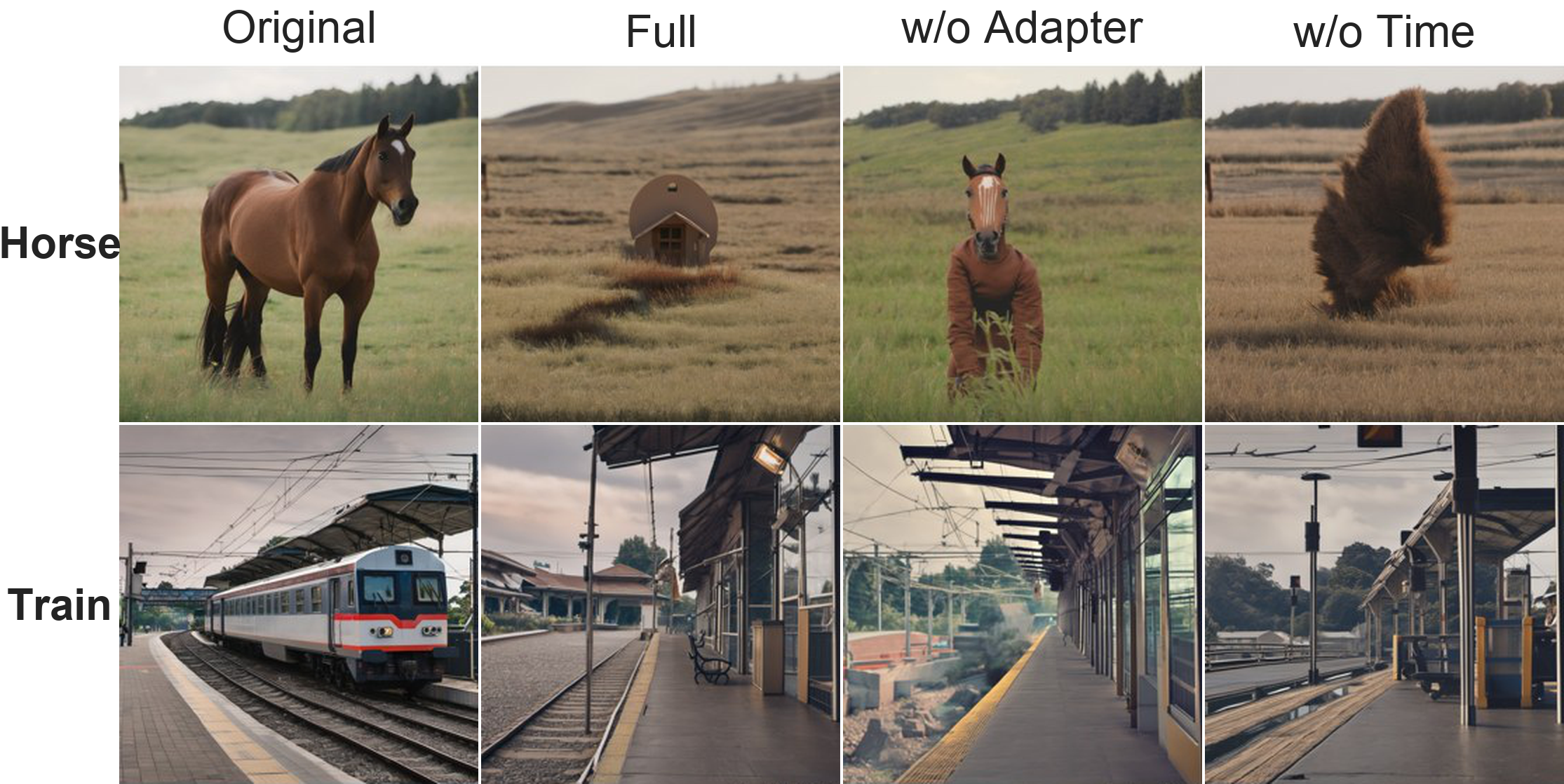}
    \subcaption{Architecture-ablation erasure results.}
    \label{fig:erasure_architecture_ablation}
\end{minipage}
\hfill
\begin{minipage}[t]{0.43\linewidth}
    \centering
    \includegraphics[width=\linewidth]{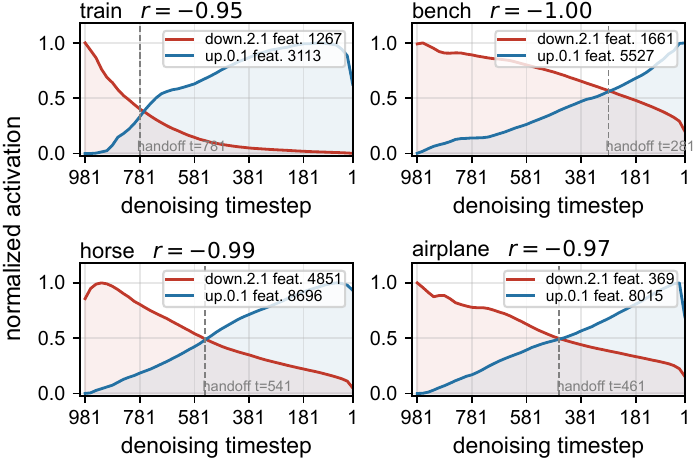}
    \subcaption{Cross-block temporal succession.}
    \label{fig:concept_flow}
\end{minipage}
\caption{(a) Full refers to the complete SharedSAE model. Removing the block adapter can leave the target recognizable, whereas removing temporal conditioning can introduce residual artifacts. (b) Representative \texttt{down.2.1} features decay while \texttt{up.0.1} features rise.}
\label{fig:intervention_handoff}
\end{figure*}

\noindent\textbf{General Concept Erasure.} Figure~\ref{fig:sharedsae_erasure} shows erasure results for 16 concepts. The same intervention suppresses the target for most of these concepts.

\noindent\textbf{Nudity Erasure.} Following the evaluation setting of SPACE \citep{novello2026space}, we evaluate SharedSAE on the I2P-142 nudity subset to align with previous work. SharedSAE substantially reduces NudeNet detections, as shown in Table~\ref{tab:nudity_compare}. Under our detector-positive protocol over the full I2P set, the detector-positive subset contains 139 cases, of which only 1 remains NudeNet-positive after SharedSAE intervention. Figure~\ref{fig:nudity_spatial} shows that the localized features concentrate on exposed body regions, enabling spatially focused suppression. Figure~\ref{fig:shared_feature_and_residual}\subref{fig:nudity_residual_case} presents the sole residual detection, which comes from a visible foot that remains after erasure.

\noindent\textbf{Style Erasure.} Unlike object concepts with identifiable subjects, global styles are encoded by distributed visual attributes such as color, illumination, texture, contour, and scene composition, making them substantially harder to erase. We evaluate five representative styles: cyberpunk, Monet, ukiyo-e, Van Gogh, and watercolor. As shown in Figure~\ref{fig:style_erasure_blacklist}\subref{fig:style_kloc}, increasing the per-block feature set from 10 to 50 generally weakens the target style, but does not fully remove it or consistently restore a neutral photographic appearance. This difficulty is particularly evident for cyberpunk, whose characteristic features, including strong lighting effects, neon colors, reflections, and scene-wide illumination patterns, are distributed across signs, roads, building facades, and the distant streetscape. Consequently, many highly activated features are spatially dispersed and can be readily filtered out by the blacklist. Figure~\ref{fig:style_erasure_blacklist}\subref{fig:style_cyberpunk} illustrates different erasure effects under different blacklist settings.

Figure~\ref{fig:style_erasure_blacklist}\subref{fig:style_localization} shows the erasure results obtained by directly using the prompts employed for concept localization. Every evaluated style is attenuated under this setting. Cyberpunk and Monet are transformed into neutral appearances, the watercolor washes largely disappear, the flat color planes and bold outlines of ukiyo-e weaken visibly, and the characteristic visual patterns of Van Gogh are mostly gone. Together with the observations in Figure~\ref{fig:style_erasure_blacklist}\subref{fig:style_cyberpunk}, these results show that style concepts are erasable and that the main difficulty lies in accurately localizing the relevant features. Style concepts are less concrete than object concepts and more broadly distributed across visual features, so identifying a compact and precise set of features for a target style remains the primary challenge.

\begin{figure*}[t]
    \hfill
    \begin{minipage}[b]{0.25\textwidth}
        \centering
        \includegraphics[width=\linewidth]{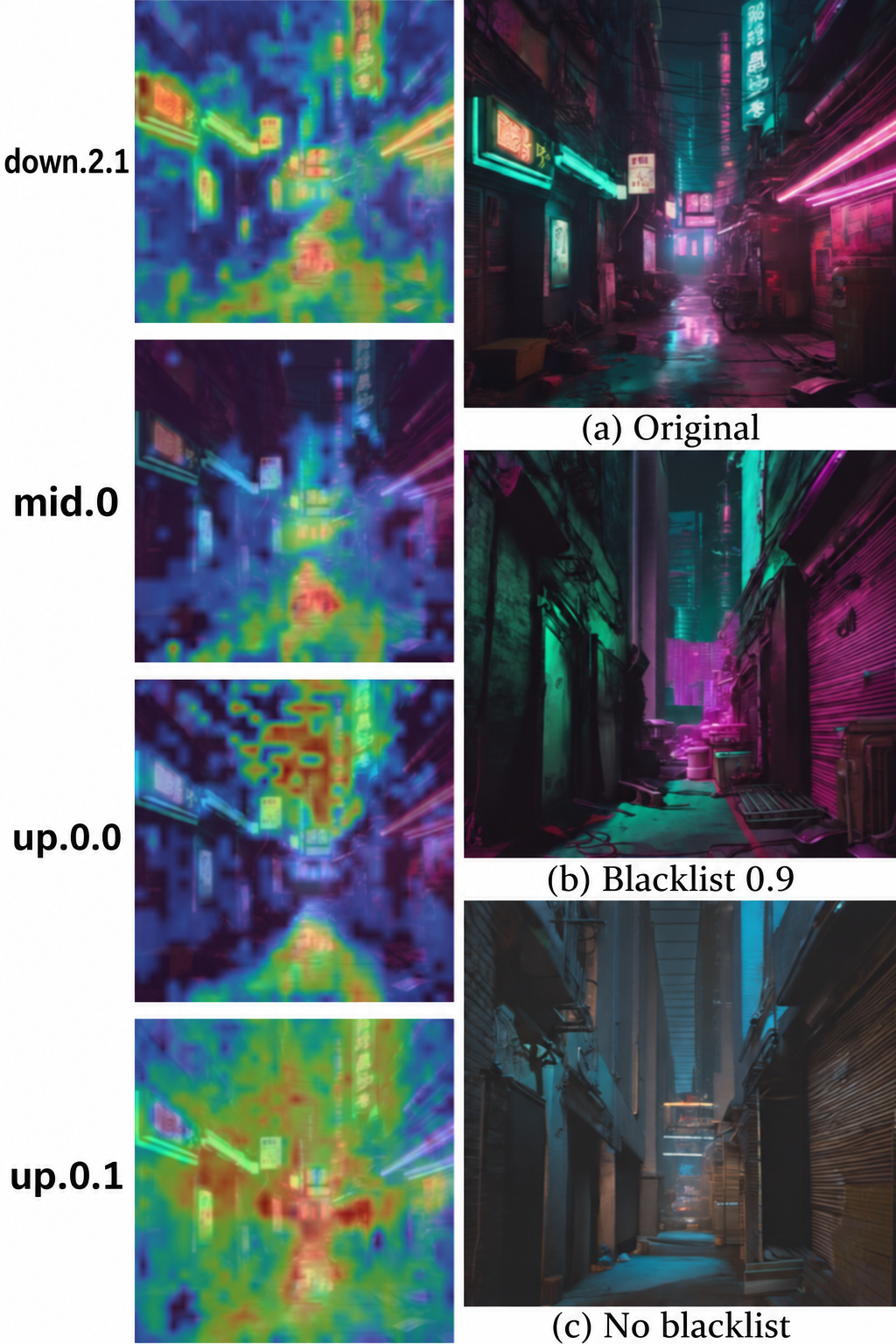}
        \subcaption{Cyberpunk style.}
        \label{fig:style_cyberpunk}
    \end{minipage}
    \hfill
    \begin{minipage}[b]{0.42\textwidth}
        \centering
        \includegraphics[width=\linewidth]{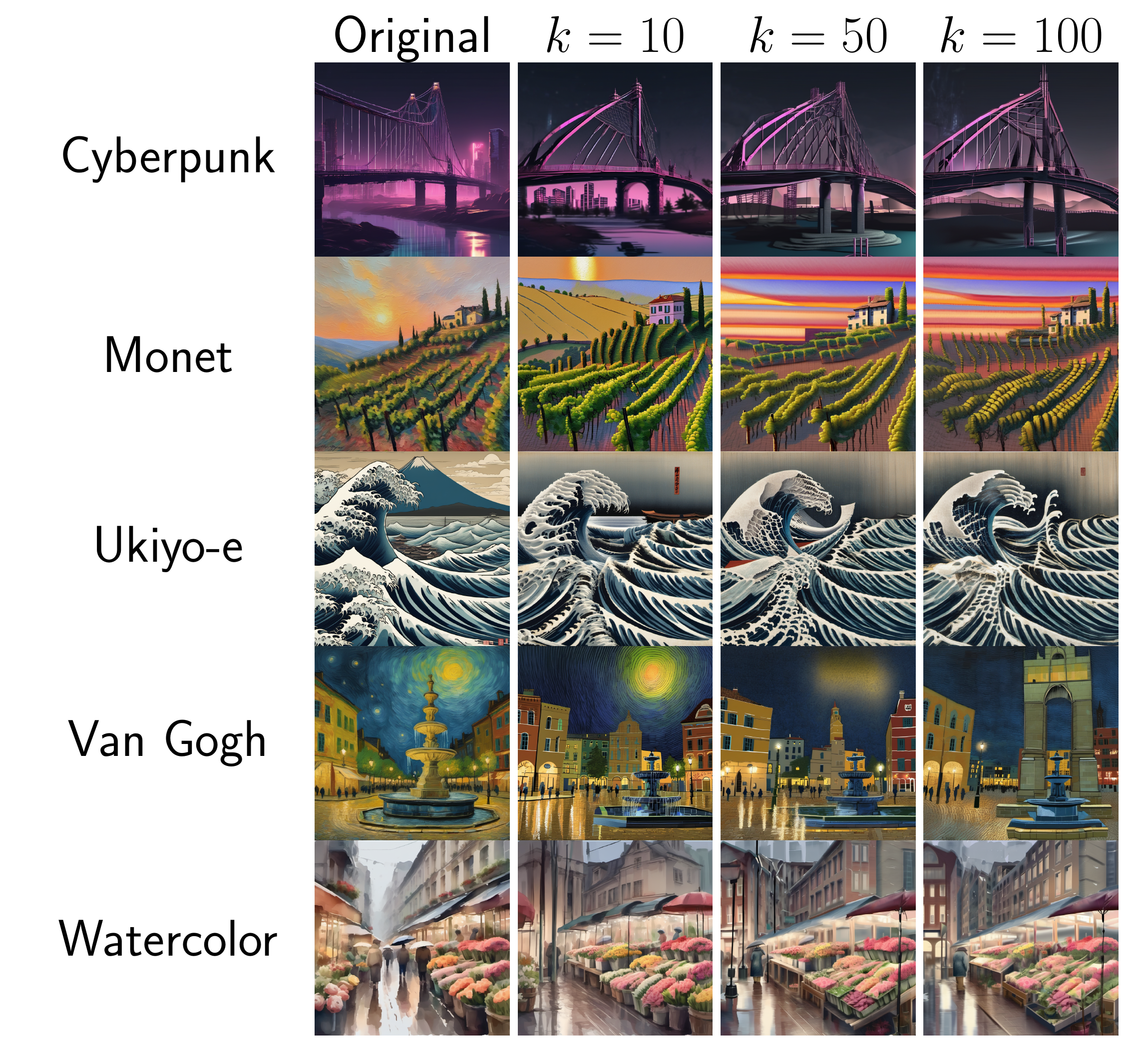}
        \subcaption{Cross-style erasure as $k_{\mathrm{loc}}$ increases.}
        \label{fig:style_kloc}
    \end{minipage}
    \hfill
    \begin{minipage}[b]{0.27\textwidth}
        \centering
        \includegraphics[width=\linewidth]{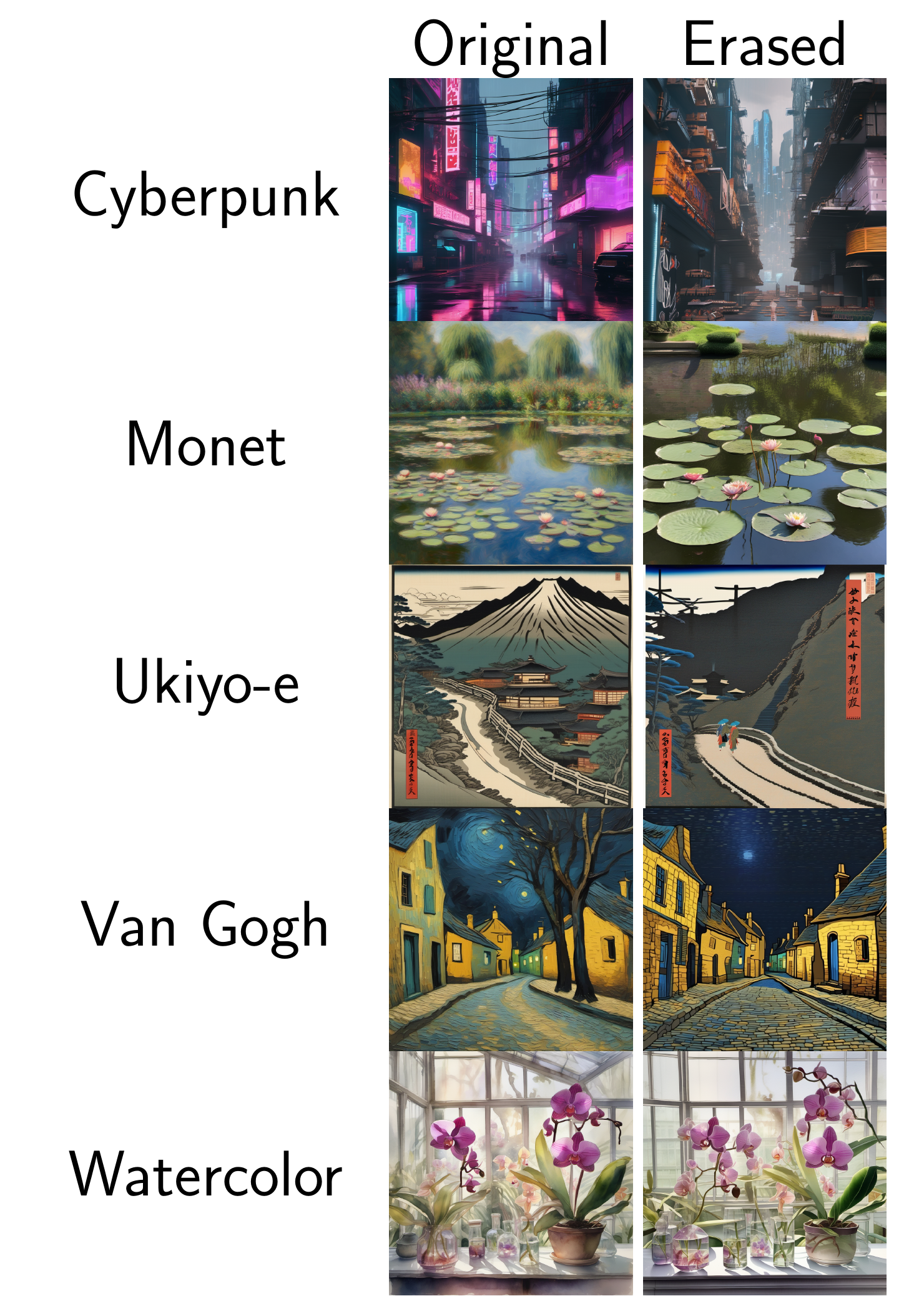}
        \subcaption{Successful examples.}
        \label{fig:style_localization}
    \end{minipage}
    \hfill
    \caption{Style erasure. (a) Cyberpunk block-wise heatmaps accompany the original, blacklist-0.9, and no-blacklist outputs. (b) Increasing the number of features used per block, $k_{\mathrm{loc}}$, from 10 to 50 attenuates each style without removing it. (c) The prompts used for concept localization erase the target style.}
    \label{fig:style_erasure_blacklist}
\end{figure*}

\noindent\textbf{Concept Feature Sharing.} Figure~\ref{fig:shared_feature_and_residual}\subref{fig:shared_feature} illustrates how a single SharedSAE feature can recur across semantically related concepts. Among the features localized at \texttt{mid.0}, feature 6223 ranks in the top-$k_{\mathrm{loc}}$ for cat, dog, and horse, while its mean activation on the remaining concepts is one to two orders of magnitude lower. Its spatial activation consistently follows the animal body despite changes in species, pose, and appearance, indicating that the feature captures visual structure shared across quadruped concepts rather than a single object class. Suppressing this shared feature can therefore influence several related concepts, linking semantic reuse in the dictionary to cross-concept coupling during intervention.

This feature direction sharing also manifests as concept compensation and leakage during erasure. Suppressing features associated with a horse or motorcycle can redirect the representation toward a nearby concept direction, such as a person. Because the coarse spatial structure of the target is established early in denoising and tends to persist throughout the trajectory, the replacement often inherits the original shape or spatial layout, even when it belongs to a different semantic category. Moreover, erasing a green truck removes the truck while retaining the green attribute. Green leaks into the background image and is automatically added to its original position. This suggests that object identity and attribute are partially disentangled in a shared sparse direction.

\noindent\textbf{Spatiotemporal Feature Succession.} Figure~\ref{fig:concept_dynamics}\subref{fig:feature_succession} shows that concept-representative feature sets change throughout denoising. The mean top-5 overlap in \texttt{down.2.1} decreases from 0.746 to 0.485, with the same downward trend across all four blocks. Raw activation magnitudes also carry a strong temporal trend, and target-feature activations in \texttt{down.2.1} consistently decay from early to late timesteps. We therefore use relative block shares to factor out global magnitude changes and isolate redistribution across blocks. Figure~\ref{fig:concept_dynamics}\subref{fig:layer_share} shows that although \texttt{down.2.1} remains the dominant contributor for most concepts, its relative share generally decreases over time, while \texttt{mid.0} and \texttt{up.0.1} make increasingly larger contributions. 

This pattern suggests that concept representations are progressively redistributed from earlier downsampling blocks toward intermediate and later stages of the U-Net as denoising proceeds. This succession is also visible at the level of individual features. Figure~\ref{fig:intervention_handoff}\subref{fig:concept_flow} shows highly ranked \texttt{down.2.1} and \texttt{up.0.1} features for train, bench, horse, and airplane. The \texttt{down.2.1} features activate early and then decay, whereas the \texttt{up.0.1} features emerge later. Their trajectories cross during denoising, showing the succession of representative features across blocks.
\begin{figure*}[!t]
    \centering
    \begin{minipage}[t]{0.49\textwidth}
        \centering
        \includegraphics[width=\linewidth]{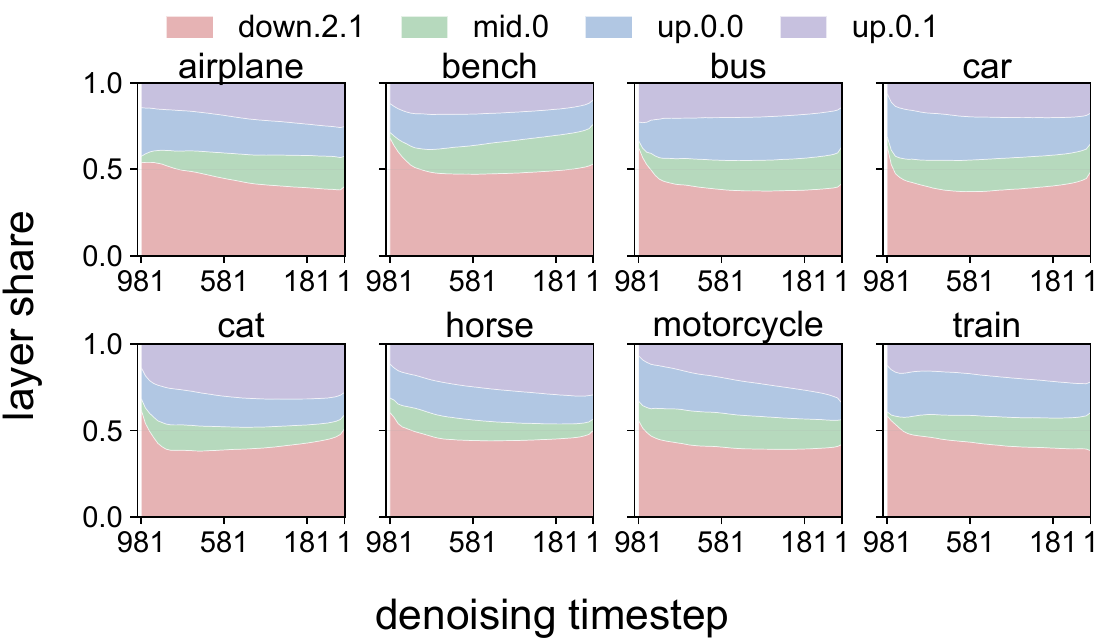}
        \subcaption{Block-wise shares of concept activation.}
        \label{fig:layer_share}
    \end{minipage}
    \hfill
    \begin{minipage}[t]{0.49\textwidth}
        \centering
        \includegraphics[width=\linewidth]{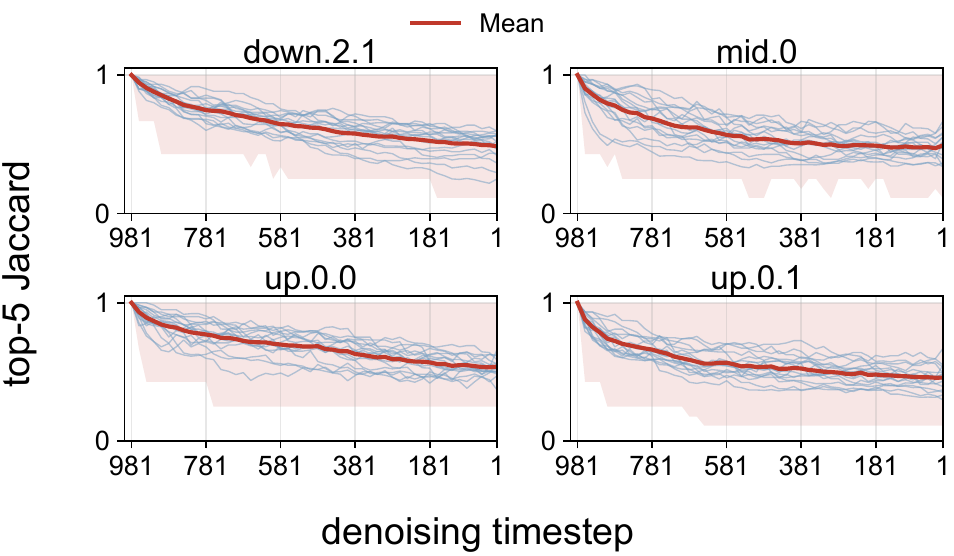}
        \subcaption{Feature-set succession within each block.}
        \label{fig:feature_succession}
    \end{minipage}
    \caption{Spatiotemporal evolution of concept-representative features. (a) Relative activation shares generally shift from \texttt{down.2.1} toward \texttt{mid.0} or \texttt{up.0.1}. (b) Top-5 feature overlap with the initial timestep decreases throughout denoising; thick lines and shaded regions denote the mean and $95\%$ interval over 340 cases.}
    \label{fig:concept_dynamics}
\end{figure*}

\section{Conclusion}
SharedSAE shows that sparse analysis of diffusion models need not be fragmented across network depth and denoising time. It learns a single dictionary shared across four U-Net blocks and the full denoising trajectory, supporting reconstruction, localization, and intervention throughout generation. This shared space is not only more compact than maintaining separate block-wise SAEs, but also makes it possible to follow how semantic representations evolve: concept-related features are reused across related concepts and progressively redistributed from early downsampling blocks toward intermediate and later blocks as denoising proceeds. The same representation supports effective intervention, including a reduction from 139 NudeNet-positive generations to 1 in our detector-positive I2P subset. Taken together, our results suggest that a shared sparse vocabulary can provide a fixed reference frame for studying representations that themselves evolve across network depth and denoising time.

\newpage
\section*{AI Use Statement}
In this work, we did not use generative AI tools for any task requiring disclosure under the ICLR AI policy. Specifically, generative AI was not used to create synthetic datasets; develop theoretical models or conceptual frameworks; derive mathematical statements; supply key components of or aid in constructing mathematical proofs; formulate or refine research hypotheses; design, select, or evaluate experimental methodology; implement our proposed algorithm; translate research materials; clean or restructure datasets; conduct qualitative or thematic analysis; or interpret experimental findings.

Generative AI tools were used for tasks for which disclosure is recommended: drafting and revising manuscript prose to improve readability, suggesting structural improvements to the paper, editing source code, and polishing the presentation and LaTeX typesetting of mathematical derivations. Generative AI did not generate the core mathematical claims or proofs.

All AI-assisted outputs were reviewed by the authors. AI-modified code was inspected and tested for correctness. AI-edited manuscript text was cross-checked against our original experimental results, mathematical derivations, and intended claims, to ensure no unsubstantiated assertions were introduced and that the interpretation of our findings remained unchanged. The authors take full responsibility for all final content of this work, including text, claims, and artifacts produced with generative AI assistance.

\bibliography{sharedsae_references}
\bibliographystyle{iclr2027_conference}


\setlength{\abovedisplayskip}{10pt plus 2pt minus 5pt}
\setlength{\belowdisplayskip}{10pt plus 2pt minus 5pt}
\setlength{\abovedisplayshortskip}{0pt plus 3pt}
\setlength{\belowdisplayshortskip}{6pt plus 3pt minus 3pt}
\appendix

\section{Sparse Vector-Field Interpretation of SharedSAE}
\label{sec:analytical_properties}

SharedSAE acts on the block residuals $x=h^{\mathrm{out}}-h^{\mathrm{in}}$ of the epsilon-predicting SDXL U-Net.
Although SDXL does not directly regress an ODE velocity, an epsilon prediction induces a score estimate,
and hence a probability-flow ODE associated with the same diffusion probability path.
This section uses that standard correspondence only as a bridge from hidden-state perturbations to
a local generative vector field. The SharedSAE-specific statement is then that a decoder direction
is pushed forward through the downstream U-Net Jacobian into a local velocity direction, and that
the final image-space effect is the accumulated transport of these local perturbations along the
denoising trajectory.

\subsection{From Epsilon Prediction to a Probability-Flow Velocity}

Let the Gaussian diffusion path be
\begin{equation}
y_t = a_t y_0 + s_t \epsilon,
\qquad
\epsilon \sim \mathcal{N}(0,I),
\label{eq:gaussian_diffusion_path}
\end{equation}
where $a_t>0$ and $s_t>0$ are differentiable signal and noise schedules.
A continuous-time linear SDE that realizes these Gaussian conditionals can be written as
\begin{equation}
\mathrm{d}y
=
f_t y\,\mathrm{d}t
+
g_t\,\mathrm{d}W_t,
\qquad
f_t = \frac{\dot a_t}{a_t},
\label{eq:linear_forward_sde}
\end{equation}
with
\begin{equation}
g_t^2
=
\frac{\mathrm{d}}{\mathrm{d}t}s_t^2
-
2 f_t s_t^2.
\label{eq:diffusion_coefficient_from_schedule}
\end{equation}
Under the usual epsilon-prediction parameterization,
the exact conditional score satisfies
\begin{equation}
\nabla_y \log p_t(y_t\mid c)
=
-\frac{\mathbb{E}[\epsilon\mid y_t,c]}{s_t}.
\label{eq:exact_score_epsilon}
\end{equation}
A trained denoiser approximates the conditional noise expectation, so we define
\begin{equation}
s_\theta(y_t,t,c)
:=
-\frac{\epsilon_\theta(y_t,t,c)}{s_t}
\approx
\nabla_y \log p_t(y_t\mid c).
\label{eq:epsilon_to_score}
\end{equation}

The probability-flow ODE associated with the forward SDE is
\begin{equation}
\frac{\mathrm{d}y}{\mathrm{d}t}
=
v_{\mathrm{pf}}(y,t,c)
=
f_t y
-
\frac{1}{2}g_t^2 s_\theta(y,t,c).
\label{eq:probability_flow_ode}
\end{equation}
Substituting Eq.~(\ref{eq:epsilon_to_score}) gives
\begin{equation}
v_{\mathrm{pf}}(y,t,c)
=
f_t y
+
\kappa_t \epsilon_\theta(y,t,c),
\qquad
\kappa_t
=
\frac{g_t^2}{2s_t}.
\label{eq:epsilon_to_velocity}
\end{equation}
Equivalently, using Eq.~(\ref{eq:diffusion_coefficient_from_schedule}),
\begin{equation}
\kappa_t
=
\dot s_t
-
\frac{\dot a_t}{a_t}s_t.
\label{eq:kappa_schedule_form}
\end{equation}

\paragraph{Effective prediction under classifier-free guidance.}
The experiments use the prediction actually consumed by the sampler.
To cover both conditional sampling and classifier-free guidance (CFG), let
\begin{equation}
\widehat{\epsilon}_\theta(y_t,t,c)
\label{eq:effective_epsilon}
\end{equation}
denote the \emph{effective} noise prediction passed to the scheduler.
Without CFG, $\widehat{\epsilon}_\theta=\epsilon_\theta(\cdot\mid c)$.
With guidance scale $w$,
\begin{equation}
\widehat{\epsilon}_\theta
=
\epsilon_\theta(\cdot\mid \varnothing)
+
w\left[
\epsilon_\theta(\cdot\mid c)
-
\epsilon_\theta(\cdot\mid \varnothing)
\right].
\label{eq:cfg_effective_epsilon}
\end{equation}
For guided sampling, Eq.~(\ref{eq:epsilon_to_velocity}) should therefore be interpreted as the effective denoising vector field used by the sampler, rather than as a claim that CFG exactly equals the probability-flow ODE of the original conditional density. The same derivation, however,
applies whether the intervention modifies the conditional branch, the unconditional branch, or both.

\paragraph{Standard marginal-equivalence fact.}
If $s_\theta=\nabla_y\log p_t$ is exact, then the probability-flow ODE in
Eq.~(\ref{eq:probability_flow_ode}) and the corresponding reverse-time SDE share the same
one-time marginals $p_t$ \citep{song2021score}.
We use this standard result only to justify the velocity-field interpretation; the propositions below
concern the SharedSAE intervention itself.

\subsection{SharedSAE Features as Pushed-Forward Velocity Directions}

For block $b$ and denoising time $t$, let $x_{b,t}$ be the residual update.
Following Eq.~(2) in the main text, we define the per-coordinate residual scale
\begin{equation}
\sigma_b
=
\frac{\max(\mu_b,\epsilon)}{\sqrt{d}},
\qquad
\widetilde{x}_{b,t}
=
\frac{x_{b,t}}{\sigma_b}.
\label{eq:normalization_convention}
\end{equation}
Therefore, a vector represented in the normalized SharedSAE residual space is mapped back to the
original U-Net residual space by multiplication with $\sigma_b$.

Consistent with Eq.~(4) in the main text, SharedSAE reconstructs the normalized residual as
\begin{equation}
\widetilde{x}_{b,t}
\approx
Dz_{b,t}+\beta_x,
\label{eq:sae_normalized_reconstruction}
\end{equation}
and hence
\begin{equation}
x_{b,t}
\approx
\sigma_b
\left(
Dz_{b,t}+\beta_x
\right)
=
\sigma_b
\sum_{j=1}^{m}z_{j,b,t}D_j
+
\sigma_b\beta_x.
\label{eq:sae_residual_decomposition}
\end{equation}

As in the main text, let $h^{\mathrm{out}}_{b,t}$ denote the block output activation.
For the Jacobian calculation we regard $h^{\mathrm{out}}_{b,t}$ as the vectorized block state and
suppress the spatial-token index $i$ for readability; equivalently, each $D_j$ is injected at the
corresponding token coordinates.
Holding the upstream state fixed locally, we write the downstream map from the block output to the effective sampler prediction as
\begin{equation}
\widehat{\epsilon}_\theta(y_t,t,c)
=
G_{b,t,c}\!\left(h^{\mathrm{out}}_{b,t}\right),
\label{eq:downstream_unet_map}
\end{equation}
with Jacobian
\begin{equation}
J_{b,t,c}
=
\frac{
\partial G_{b,t,c}(h^{\mathrm{out}}_{b,t})
}{
\partial h^{\mathrm{out}}_{b,t}
}.
\label{eq:downstream_jacobian}
\end{equation}

\begin{proposition}[Sparse velocity pushforward]
\label{prop:sparse_velocity_pushforward}
Assume $G_{b,t,c}$ is twice continuously differentiable in a neighborhood of the current state.
For a sufficiently small sparse-code perturbation $\Delta z$ applied at block $b$ and time $t$,
the induced change in the effective probability-flow velocity satisfies
\begin{equation}
\Delta v_{\mathrm{pf}}
=
\sum_{j=1}^{m}
\Delta z_j
\psi^{\mathrm{ODE}}_{j,b,t,c}
+
\mathcal{O}(\lVert \Delta z\rVert_2^2),
\label{eq:sparse_velocity_basis}
\end{equation}
where
\begin{equation}
\psi^{\mathrm{ODE}}_{j,b,t,c}
=
\kappa_t
\sigma_b
J_{b,t,c}D_j.
\label{eq:ode_feature_direction}
\end{equation}
For the reverse-time SDE, the \emph{feature-induced drift perturbation} is collinear with the ODE
perturbation and has twice its score-dependent coefficient:
\begin{equation}
\Delta b_{\mathrm{SDE}}
=
2\,\Delta v_{\mathrm{pf}}
+
\mathcal{O}(\lVert \Delta z\rVert_2^2).
\label{eq:sde_ode_perturbation_relation}
\end{equation}
This statement concerns the perturbations; the full reverse-SDE drift and full probability-flow
velocity need not themselves be collinear because both also contain the forward drift term.
\end{proposition}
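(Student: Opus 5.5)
The plan is a first-order Taylor expansion of the composite map from sparse code to velocity, followed by a comparison of the SDE and ODE drift formulas. First I fix how a code perturbation reaches the block output. A perturbation $\Delta z$ at block $b$ and time $t$ changes the normalized reconstruction by $D\Delta z$, by Eq.~(\ref{eq:sae_normalized_reconstruction}); the bias $\beta_x$ cancels in the difference. By Eq.~(\ref{eq:sae_residual_decomposition}) this becomes a change $\sigma_b D\Delta z=\sigma_b\sum_j \Delta z_j D_j$ in the residual. Because $h^{\mathrm{out}}=h^{\mathrm{in}}+x$ and the upstream state, hence $h^{\mathrm{in}}$, is held fixed, the block output moves by exactly $\Delta h=\sigma_b D\Delta z$. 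This is linear in $\Delta z$, with $\lVert\Delta h\rVert_2\le \sigma_b\lVert D\rVert_{\mathrm{op}}\lVert\Delta z\rVert_2$. This step also matches the intervention rule in the main text, which subtracts $\lambda_{\mathrm{erase}}\sigma_b r_c$ from $h^{\mathrm{out}}$.

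Second, I push $\Delta h$ through $G_{b,t,c}$ from Eq.~(\ref{eq:downstream_unet_map}). Since $G$ is $C^2$ near the current state, Taylor's theorem with remainder on a small closed ball gives
\[
\Delta\widehat{\epsilon}_\theta = J_{b,t,c}\Delta h + R,\qquad \lVert R\rVert_2\le \tfrac12 M\lVert\Delta h\rVert_2^2 ,
\]
where $M$ bounds the Hessian of $G$ on that ball. Substituting the bound on $\lVert\Delta h\rVert_2$ turns the remainder into $\mathcal{O}(\lVert\Delta z\rVert_2^2)$. The velocity in Eq.~(\ref{eq:epsilon_to_velocity}) is $v_{\mathrm{pf}}=f_t y+\kappa_t\widehat{\epsilon}_\theta$, and $y$ and $t$ are fixed, so the term $f_t y$ drops out of the difference. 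What remains is $\Delta v_{\mathrm{pf}}=\kappa_t\Delta\widehat{\epsilon}_\theta$. Expanding $J\sigma_b D\Delta z$ column by column gives Eqs.~(\ref{eq:sparse_velocity_basis})--(\ref{eq:ode_feature_direction}). The constant $\kappa_t$ is finite and does not affect the order of the remainder. Under CFG, Eq.~(\ref{eq:cfg_effective_epsilon}) is an affine combination of branch outputs, so $G$ stays $C^2$ and the argument is unchanged whichever branch is edited.

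Third, for the SDE claim I write the reverse-time drift for the forward SDE in Eq.~(\ref{eq:linear_forward_sde}):
\[
b_{\mathrm{SDE}}=f_t y-g_t^2 s_\theta = f_t y+\frac{g_t^2}{s_t}\widehat{\epsilon}_\theta = f_t y+2\kappa_t\widehat{\epsilon}_\theta .
\]
This uses Eq.~(\ref{eq:epsilon_to_score}), and the score coefficient is twice that of Eq.~(\ref{eq:probability_flow_ode}). Differencing as before removes $f_t y$ and gives $\Delta b_{\mathrm{SDE}}=2\kappa_t\Delta\widehat{\epsilon}_\theta=2\Delta v_{\mathrm{pf}}$. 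Because both sides come from the same $\Delta\widehat{\epsilon}_\theta$, including its remainder, the relation is in fact exact and not only first order, which is stronger than stated. The closing remark then follows from $f_t y\neq 0$.

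The main obstacle is being careful about what is held fixed. The Jacobian treats $h^{\mathrm{out}}_{b,t}$ as the only input to $G$, but in the U-Net the skip connections also carry $h^{\mathrm{in}}$ and earlier states into later blocks. I will handle this with the proposition's hypothesis of holding the upstream state fixed locally: I define $G$ as the map from the block output to the prediction with every other intermediate frozen at its current value, so that skip inputs taken after the edit are functions of $h^{\mathrm{out}}$. A second point is that the reverse-SDE sign convention depends on the direction of time. I will state it in forward time $t$, where the drift is $f_t y-g_t^2\nabla\log p_t$, matching the convention of Eq.~(\ref{eq:probability_flow_ode}).
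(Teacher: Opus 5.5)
Your proposal is correct and follows the paper's proof: the code perturbation gives the block-output displacement $\sigma_b D\Delta z$, a first-order Taylor expansion of $G_{b,t,c}$ together with $\Delta v_{\mathrm{pf}}=\kappa_t\Delta\widehat{\epsilon}_\theta$ yields the column-wise expansion, and comparing the $g_t^2$ and $g_t^2/2$ score coefficients gives the factor of two. Your additions are valid refinements of the same argument: the explicit Hessian bound on the remainder, the handling of CFG and skip connections, and the observation that $\Delta b_{\mathrm{SDE}}=2\,\Delta v_{\mathrm{pf}}$ holds exactly rather than only to first order.
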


\begin{proof}
A sparse-code perturbation produces a hidden-state displacement
\begin{equation}
\Delta h^{\mathrm{out}}_{b,t}
=
\sigma_b D\Delta z.
\label{eq:hidden_displacement_from_code}
\end{equation}
A first-order Taylor expansion of the downstream U-Net gives
\begin{equation}
\Delta\widehat{\epsilon}_\theta
=
J_{b,t,c}\Delta h^{\mathrm{out}}_{b,t}
+
\mathcal{O}(\lVert\Delta h^{\mathrm{out}}_{b,t}\rVert_2^2).
\label{eq:epsilon_taylor}
\end{equation}
For a finite hidden-state displacement, the fundamental theorem of calculus gives the exact identity
\begin{equation}
G_{b,t,c}(h+\Delta h)-G_{b,t,c}(h)
=
\bar J_{b,t,c}(h,\Delta h)\Delta h,
\label{eq:finite_displacement_jacobian}
\end{equation}
where
\begin{equation}
\bar J_{b,t,c}(h,\Delta h)
=
\int_0^1 J_{G_{b,t,c}}(h+s\Delta h)\,\mathrm{d}s.
\label{eq:path_averaged_jacobian}
\end{equation}
Thus Eq.~(\ref{eq:epsilon_taylor}) replaces the path-averaged Jacobian by its value at the
unperturbed state. Its accuracy is determined by the Jacobian variation along the displacement path.
By Eq.~(\ref{eq:epsilon_to_velocity}), perturbations of the effective noise prediction induce
\begin{equation}
\Delta v_{\mathrm{pf}}
=
\kappa_t
\Delta\widehat{\epsilon}_\theta.
\label{eq:velocity_from_epsilon_perturbation}
\end{equation}
Substituting Eqs.~(\ref{eq:hidden_displacement_from_code}) and
(\ref{eq:epsilon_taylor}) yields
\begin{equation}
\Delta v_{\mathrm{pf}}
=
\kappa_t\sigma_b
J_{b,t,c}D\Delta z
+
\mathcal{O}(\lVert\Delta z\rVert_2^2),
\end{equation}
which gives Eqs.~(\ref{eq:sparse_velocity_basis})--(\ref{eq:ode_feature_direction})
after expanding over decoder columns.

For the reverse-time SDE, the score term has coefficient $g_t^2$ rather than $g_t^2/2$.
Therefore, its perturbation induced by the same score change has twice the coefficient of the
probability-flow ODE perturbation, proving Eq.~(\ref{eq:sde_ode_perturbation_relation}).
\end{proof}

\paragraph{Feature identity versus feature effect.}
The shared decoder column $D_j$ supplies a common sparse coordinate across blocks and timesteps,
whereas its local generative effect is
\begin{equation}
\psi^{\mathrm{ODE}}_{j,b,t,c}
=
\kappa_t\sigma_bJ_{b,t,c}D_j,
\end{equation}
which remains context dependent through the downstream Jacobian and the diffusion schedule.
Thus SharedSAE shares \emph{feature coordinates}, not a single context-invariant velocity vector.

\subsection{Concept Erasure as a Sparse Velocity Perturbation}

Let $\mathcal{F}_{c,b}$ be the set of features localized for concept $c$ at block $b$, and let
\begin{equation}
q_j(b,t)=g_jw_j(b,t)
\end{equation}
combine the rank and temporal weights.
The main text denotes the global erasure strength by $\lambda_{\mathrm{erase}}$ in Eq.~(9), with
$\lambda_{\mathrm{erase}}=10$ for general concept erasure and $\lambda_{\mathrm{erase}}=5$ for I2P
nudity erasure. We write $\lambda$ for this fixed erasure strength below. The adapter scale
$\alpha_A$ from Eq.~(3) is a separate parameter and does not enter the analysis.
The normalized-space concept direction is
\begin{equation}
r_c(b,t)
=
\sum_{j\in\mathcal{F}_{c,b}}
q_j(b,t)\,z_{j,b,t}D_j.
\label{eq:normalized_concept_direction}
\end{equation}
Consistent with Eq.~(\ref{eq:normalization_convention}) and the main-text intervention in Eq.~(9),
the corresponding U-Net-space displacement is
\begin{equation}
\Delta h^{\mathrm{out}}_{b,t}
=
-\lambda
\sigma_b
r_c(b,t).
\label{eq:hidden_concept_subtraction}
\end{equation}
The resulting first-order velocity change is therefore
\begin{equation}
\Delta v_{\mathrm{pf}}
\approx
-\lambda
\sum_{j\in\mathcal{F}_{c,b}}
q_j(b,t)z_{j,b,t}
\psi^{\mathrm{ODE}}_{j,b,t,c}.
\label{eq:concept_velocity_subtraction}
\end{equation}

The implementation additionally caps the mean absolute per-step perturbation ratio at
$\delta_{\max}=0.2$, as stated in the main text.
The local analysis above describes the unclipped branch of the intervention; when the cap is active,
the same pushforward applies to the clipped displacement away from the clipping boundary.

We define the \emph{localized feature-induced velocity subspace}
\begin{equation}
\mathcal{V}_c(b,t)
=
\operatorname{span}
\left\{
\psi^{\mathrm{ODE}}_{j,b,t,c}
\;\middle|\;
j\in\mathcal{F}_{c,b}
\right\}.
\label{eq:localized_velocity_subspace}
\end{equation}
Equation~(\ref{eq:concept_velocity_subtraction}) shows that the intervention subtracts a
state-dependent vector in this empirically localized subspace.
This does not assert that $\mathcal{V}_c$ is the unique or complete semantic subspace of concept $c$;
rather, it is the subspace induced by the features selected by the localization procedure.

\paragraph{Multiple intervened blocks.}
To separate the fixed erasure strength from the expansion parameter, define the complete block-wise
displacement $u_b=-\lambda\sigma_b r_c(b,t)$ and introduce the auxiliary path
\begin{equation}
\Delta h_b(\varepsilon)=\varepsilon u_b,
\qquad
\varepsilon\in[0,1].
\label{eq:auxiliary_intervention_path}
\end{equation}
Here $\varepsilon=0$ is the unperturbed model, and $\varepsilon=1$ corresponds to the complete intervention
used in the experiments. A multivariate Taylor expansion at $\varepsilon=0$ gives
\begin{equation}
\Delta\widehat{\epsilon}_\theta(\varepsilon)
=
\varepsilon
\sum_{b\in\mathcal{B}}J_{b,t,c}u_b
+
\mathcal{O}(\varepsilon^2).
\label{eq:multiblock_additivity}
\end{equation}
For two successive blocks, an upstream intervention changes the later sparse code according to
$z_2(\varepsilon)=z_2(0)+\varepsilon\dot z_2(0)+\mathcal{O}(\varepsilon^2)$.
The later intervention contains another factor of $\varepsilon$, so the induced cross-block term is
$\mathcal{O}(\varepsilon^2)$, not $\mathcal{O}(\lambda^2)$. Equation~(\ref{eq:multiblock_additivity})
is local at $\varepsilon=0$. At $\varepsilon=1$, its accuracy is determined by nonlinear
interactions along the complete intervention path.

\paragraph{Local magnitude bound.}
Let $D_{\mathcal{F}_{c,b}}$ denote the decoder submatrix containing the localized columns and let
$a_{b,t}$ be the coefficient vector with entries
$a_j=q_j(b,t)z_{j,b,t}$.
Then
\begin{equation}
\lVert\Delta v_{\mathrm{pf}}\rVert_2
\lesssim
\lambda
|\kappa_t|\sigma_b
\lVert J_{b,t,c}\rVert_2
\lVert D_{\mathcal{F}_{c,b}}\rVert_2
\lVert a_{b,t}\rVert_2.
\label{eq:velocity_magnitude_bound}
\end{equation}
Since each decoder column is unit norm, a looser but simpler bound is
\begin{equation}
\lVert\Delta v_{\mathrm{pf}}\rVert_2
\lesssim
\lambda
|\kappa_t|\sigma_b
\lVert J_{b,t,c}\rVert_2
\sum_{j\in\mathcal{F}_{c,b}}
|q_j(b,t)z_{j,b,t}|.
\label{eq:velocity_magnitude_bound_l1}
\end{equation}

Because both the residual and the intervention are mapped from normalized space by the same residual scale $\sigma_b$, block normalization preserves the relative intervention scale before the downstream Jacobian acts:
\begin{equation}
\frac{
\lVert\Delta h^{\mathrm{out}}_{b,t}\rVert_2
}{
\lVert x_{b,t}\rVert_2
}
=
\lambda
\frac{
\left\lVert
\sum_{j\in\mathcal{F}_{c,b}}
q_j(b,t)z_{j,b,t}D_j
\right\rVert_2
}{
\lVert\widetilde{x}_{b,t}\rVert_2
}.
\label{eq:relative_hidden_intervention}
\end{equation}
The residual scale $\sigma_b$ cancels, so the same normalized intervention has a comparable
relative meaning across blocks before block-specific downstream dynamics are applied.

\subsection{Accumulation Along the Denoising Trajectory}

The continuous-time analysis below is an idealization of the discrete sampler used in the experiments.
Let $\tau\in[0,1]$ increase from noise to data and absorb the reversal of the original diffusion time
into an effective vector field $\bar v$:
\begin{equation}
\frac{\mathrm{d}y}{\mathrm{d}\tau}
=
\bar v(y,\tau,c),
\qquad
y(0)=y_{\mathrm{noise}}.
\label{eq:reverse_sampling_clock}
\end{equation}
Write the intervened dynamics as
\begin{equation}
\frac{\mathrm{d}y^\varepsilon}{\mathrm{d}\tau}
=
\bar v(y^\varepsilon,\tau,c)
+
\varepsilon
u(y^\varepsilon,\tau,c),
\label{eq:perturbed_continuous_dynamics}
\end{equation}
where $u$ collects the complete pushed-forward contributions at the fixed strength $\lambda$.

Define
\begin{equation}
A(\tau)
=
J_y\bar v(y_\tau,\tau,c)
\end{equation}
and the state-transition matrix $\Phi(\tau,s)$ by
\begin{equation}
\frac{\partial \Phi(\tau,s)}{\partial\tau}
=
A(\tau)\Phi(\tau,s),
\qquad
\Phi(s,s)=I.
\label{eq:state_transition_matrix}
\end{equation}

\begin{theorem}[First-order trajectory displacement]
\label{thm:trajectory_variation}
Assume $\bar v$ and $u$ are twice continuously differentiable in a neighborhood of the unperturbed
trajectory and that the solution remains in this neighborhood for sufficiently small
$\varepsilon$.
Then
\begin{equation}
y^\varepsilon(1)-y(1)
=
\varepsilon
\int_0^1
\Phi(1,\tau)
u(y_\tau,\tau,c)
\,\mathrm{d}\tau
+
\mathcal{O}(\varepsilon^2).
\label{eq:trajectory_variation}
\end{equation}
For SharedSAE,
\begin{equation}
u(y_\tau,\tau,c)
=
-\lambda
\sum_{b\in\mathcal{B}}
\sum_{j\in\mathcal{F}_{c,b}}
q_j(b,t(\tau))
z_{j,b,t(\tau)}
\bar\psi_{j,b,t(\tau),c},
\label{eq:sharedsae_continuous_u}
\end{equation}
where $\bar\psi$ denotes the local pushed-forward direction after the change of clock from $t$ to $\tau$.
Hence
\begin{equation}
\begin{aligned}
y^\varepsilon(1)-y(1)
\approx
-\varepsilon\lambda
\sum_{b\in\mathcal{B}}
\sum_{j\in\mathcal{F}_{c,b}}
\int_0^1
&\Phi(1,\tau)\,
q_j(b,t(\tau))\,
z_{j,b,t(\tau)}
\\
&\cdot
\bar\psi_{j,b,t(\tau),c}
\,\mathrm{d}\tau.
\end{aligned}
\label{eq:sharedsae_trajectory_integral}
\end{equation}
\end{theorem}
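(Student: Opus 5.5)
This is the classical first-order sensitivity of an ODE solution to a parameter, so the plan is the variational equation followed by variation of constants. Write $y^\varepsilon(\tau)$ for the solution of Eq.~(\ref{eq:perturbed_continuous_dynamics}) and $y_\tau=y^0(\tau)$ for the unperturbed trajectory. Since $\bar v+\varepsilon u$ is $C^2$ jointly in $(y,\varepsilon)$ on the assumed neighborhood, and the solution stays in that neighborhood for small $\varepsilon$, the standard differentiable-dependence theorem applies. It gives that $\varepsilon\mapsto y^\varepsilon(\tau)$ is $C^2$, uniformly in $\tau\in[0,1]$. Taylor's theorem in $\varepsilon$ with a uniform bound on the second derivative on the compact interval then yields
\begin{equation*}
y^\varepsilon(1)-y(1)=\varepsilon\,\delta(1)+\mathcal{O}(\varepsilon^2),
\qquad
\delta(\tau):=\partial_\varepsilon y^\varepsilon(\tau)\big|_{\varepsilon=0}.
\end{equation*}

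Next I would differentiate the integral form $y^\varepsilon(\tau)=y_{\mathrm{noise}}+\int_0^\tau[\bar v(y^\varepsilon,s,c)+\varepsilon u(y^\varepsilon,s,c)]\,\mathrm{d}s$ at $\varepsilon=0$. Differentiation under the integral is justified by continuity of the derivatives on a compact set. Because the initial condition does not depend on $\varepsilon$, this gives the linear inhomogeneous variational equation
\begin{equation*}
\dot\delta(\tau)=A(\tau)\delta(\tau)+u(y_\tau,\tau,c),
\qquad
\delta(0)=0.
\end{equation*}
Using the state-transition matrix of Eq.~(\ref{eq:state_transition_matrix}), variation of constants gives $\delta(1)=\int_0^1\Phi(1,\tau)u(y_\tau,\tau,c)\,\mathrm{d}\tau$. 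To verify this, differentiate the candidate $\int_0^\tau\Phi(\tau,s)u\,\mathrm{d}s$ and use $\Phi(\tau,\tau)=I$; uniqueness for linear ODEs then identifies it with $\delta$. This establishes Eq.~(\ref{eq:trajectory_variation}).

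For the SharedSAE specialization I would identify $u$ through Proposition~\ref{prop:sparse_velocity_pushforward} and its multi-block form Eq.~(\ref{eq:multiblock_additivity}). At each time, the first-order velocity change from the displacements $u_b=-\lambda\sigma_b r_c(b,t)$ is $\kappa_t\sum_b J_{b,t,c}u_b$. Expanding $r_c$ via Eq.~(\ref{eq:normalized_concept_direction}) and applying Eq.~(\ref{eq:ode_feature_direction}) gives $-\lambda\sum_b\sum_{j\in\mathcal{F}_{c,b}} q_j z_j\psi^{\mathrm{ODE}}_{j,b,t,c}$. The change of clock $t\mapsto\tau$ multiplies this by the constant factor $\mathrm{d}t/\mathrm{d}\tau$, including its sign. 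That factor is absorbed into $\bar\psi$, which gives Eq.~(\ref{eq:sharedsae_continuous_u}). Substituting into Eq.~(\ref{eq:trajectory_variation}) and exchanging the finite sums with the integral and with $\Phi(1,\tau)$ by linearity gives Eq.~(\ref{eq:sharedsae_trajectory_integral}).

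The main obstacle is stating precisely what $u$ is. Taken literally, the intervention enters the U-Net nonlinearly, so the perturbed field is not exactly $\bar v+\varepsilon u$. I would handle this by treating Eq.~(\ref{eq:perturbed_continuous_dynamics}) as the modeling hypothesis under which the theorem is exact. The omitted nonlinear contributions, namely the $\mathcal{O}(\varepsilon^2)$ terms from Proposition~\ref{prop:sparse_velocity_pushforward} and the cross-block couplings, would only add a further $\mathcal{O}(\varepsilon^2)$ forcing term. By a Gr\"onwall estimate on $[0,1]$, such a term changes $y^\varepsilon(1)$ only by $\mathcal{O}(\varepsilon^2)$. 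This is why the last display is stated with $\approx$.
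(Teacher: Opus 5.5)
Your proposal is correct and follows essentially the same route as the paper's proof: the first-variation equation $\dot\xi=A(\tau)\xi+u(y_\tau,\tau,c)$ with $\xi(0)=0$, variation of constants through $\Phi(1,\tau)$, a Taylor expansion in $\varepsilon$, and substitution of the multi-block pushforward; your differentiable-dependence and Gr\"onwall remarks add rigor that the paper leaves implicit. One small quibble is that $\mathrm{d}t/\mathrm{d}\tau$ need not be constant if the clock change is nonlinear, but this is harmless because the factor is absorbed into $\bar\psi$ either way.
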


\begin{proof}
Let
\begin{equation}
\xi(\tau)
=
\left.
\frac{\partial y^\varepsilon(\tau)}
{\partial \varepsilon}
\right|_{\varepsilon=0}.
\end{equation}
Differentiating Eq.~(\ref{eq:perturbed_continuous_dynamics}) with respect to
$\varepsilon$ at zero gives the first-variation equation
\begin{equation}
\frac{\mathrm{d}\xi}{\mathrm{d}\tau}
=
A(\tau)\xi
+
u(y_\tau,\tau,c),
\qquad
\xi(0)=0.
\label{eq:first_variation_equation}
\end{equation}
Variation of constants yields
\begin{equation}
\xi(1)
=
\int_0^1
\Phi(1,\tau)
u(y_\tau,\tau,c)
\,\mathrm{d}\tau.
\label{eq:first_variation_solution}
\end{equation}
A second-order Taylor expansion in $\varepsilon$ then gives
\begin{equation}
y^\varepsilon(1)-y(1)
=
\varepsilon\xi(1)
+
\mathcal{O}(\varepsilon^2),
\end{equation}
proving Eq.~(\ref{eq:trajectory_variation}).
Substituting the sum of SharedSAE pushed-forward feature contributions from
Eq.~(\ref{eq:concept_velocity_subtraction}) and
Eq.~(\ref{eq:multiblock_additivity}) gives
Eq.~(\ref{eq:sharedsae_trajectory_integral}).
\end{proof}

\subsection{Discrete-Sampler Counterpart}

The experiments use a finite-step sampler, so it is useful to state the discrete analogue explicitly.
Let one unperturbed sampling step be
\begin{equation}
y_{k+1}
=
F_k(y_k),
\qquad
k=0,\ldots,N-1,
\label{eq:discrete_sampler}
\end{equation}
and let $U_k(y_k)$ be the complete SharedSAE intervention at step $k$, including the fixed strength
$\lambda$:
\begin{equation}
y^\varepsilon_{k+1}
=
F_k(y^\varepsilon_k)
+
\varepsilon U_k(y^\varepsilon_k)
+
\mathcal{O}(\varepsilon^2).
\label{eq:discrete_perturbed_sampler}
\end{equation}
Define the discrete transition matrix
\begin{equation}
\Phi_{N,k+1}
=
J F_{N-1}(y_{N-1})
\cdots
J F_{k+1}(y_{k+1}),
\label{eq:discrete_state_transition}
\end{equation}
with $\Phi_{N,N}=I$.

\begin{corollary}[First-order displacement for a discrete sampler]
\label{cor:discrete_trajectory_variation}
Under the same local smoothness assumptions,
\begin{equation}
y^\varepsilon_N-y_N
=
\varepsilon
\sum_{k=0}^{N-1}
\Phi_{N,k+1}
U_k(y_k)
+
\mathcal{O}(\varepsilon^2).
\label{eq:discrete_trajectory_variation}
\end{equation}
\end{corollary}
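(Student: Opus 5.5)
The plan is to mirror the proof of Theorem~\ref{thm:trajectory_variation} in discrete time. The variational ODE is replaced by a linear recursion for the first variation, which is then solved by the discrete analogue of variation of constants. Define $\xi_k=\partial y^\varepsilon_k/\partial\varepsilon|_{\varepsilon=0}$ with $\xi_0=0$, since the initial noise $y_0$ is not perturbed. Local smoothness of $F_k$ and $U_k$ (twice continuously differentiable near the unperturbed iterates), together with the assumption that the perturbed iterates stay in that neighborhood for small $\varepsilon$, makes $y^\varepsilon_k$ a $C^2$ function of $\varepsilon$. This follows by induction on $k$: each $y^\varepsilon_{k+1}$ is a composition of $C^2$ maps of $y^\varepsilon_k$ and $\varepsilon$. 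Consequently $y^\varepsilon_k-y_k=\varepsilon\xi_k+\mathcal{O}(\varepsilon^2)$ for every $k\le N$, and since $N$ is finite the constants are uniform.

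Next, differentiate Eq.~(\ref{eq:discrete_perturbed_sampler}) in $\varepsilon$ at $\varepsilon=0$. By the chain rule, $\partial_\varepsilon F_k(y^\varepsilon_k)=JF_k(y_k)\xi_k$. The term $\varepsilon U_k(y^\varepsilon_k)$ contributes $U_k(y_k)$ plus terms that vanish at $\varepsilon=0$. The explicit $\mathcal{O}(\varepsilon^2)$ remainder has zero derivative. This gives the linear recursion
\begin{equation*}
\xi_{k+1}=JF_k(y_k)\,\xi_k+U_k(y_k),\qquad \xi_0=0 .
\end{equation*}
Unrolling it by induction gives $\xi_N=\sum_{k=0}^{N-1}JF_{N-1}(y_{N-1})\cdots JF_{k+1}(y_{k+1})\,U_k(y_k)=\sum_k\Phi_{N,k+1}U_k(y_k)$, where $\Phi_{N,N}=I$ covers the $k=N-1$ term. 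The induction step checks that left-multiplying by $JF_k$ extends every product $\Phi_{k,j+1}$ to $\Phi_{k+1,j+1}$, while the new source term appears with the identity factor. This is exactly the definition in Eq.~(\ref{eq:discrete_state_transition}).

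Finally, combining $y^\varepsilon_N-y_N=\varepsilon\xi_N+\mathcal{O}(\varepsilon^2)$ with the formula for $\xi_N$ yields Eq.~(\ref{eq:discrete_trajectory_variation}).

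The main obstacle is the regularity bookkeeping in the first step. The $\mathcal{O}(\varepsilon^2)$ term in Eq.~(\ref{eq:discrete_perturbed_sampler}) must be assumed uniform and differentiable, or at least $\mathcal{O}(\varepsilon^2)$ locally uniformly in $y$, so that it does not contribute to $\xi$. Errors must also be propagated through compositions. For this I would use a direct inductive bound rather than differentiability. Write $e_k=y^\varepsilon_k-y_k-\varepsilon\xi_k$ and Taylor-expand $F_k$ around $y_k$. With local Lipschitz and Hessian bounds, this gives $\|e_{k+1}\|\le L_k\|e_k\|+C_k\varepsilon^2$, and hence $\|e_N\|=\mathcal{O}(\varepsilon^2)$ with a constant depending on $N$.
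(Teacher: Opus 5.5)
Your proposal is correct and takes essentially the same route as the paper. The paper linearizes one sampler step to get $\eta_{k+1}=JF_k(y_k)\eta_k+\varepsilon U_k(y_k)+\mathcal{O}(\varepsilon^2)$ with $\eta_0=0$, then unrolls it into the products $\Phi_{N,k+1}$; your recursion for $\xi_k$ is the same one divided by $\varepsilon$, and your error sequence $e_k=y^\varepsilon_k-y_k-\varepsilon\xi_k$ with $\|e_{k+1}\|\le L_k\|e_k\|+C_k\varepsilon^2$ makes rigorous the bookkeeping that the paper leaves implicit in its remark that $\eta_k=\mathcal{O}(\varepsilon)$.
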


\begin{proof}
Let $\eta_k=y^\varepsilon_k-y_k$.
Linearizing Eq.~(\ref{eq:discrete_perturbed_sampler}) around $y_k$ gives
\begin{equation}
\eta_{k+1}
=
J F_k(y_k)\eta_k
+
\varepsilon U_k(y_k)
+
\mathcal{O}(\varepsilon^2),
\qquad
\eta_0=0,
\end{equation}
where $\eta_k=\mathcal{O}(\varepsilon)$.
Repeated substitution yields Eq.~(\ref{eq:discrete_trajectory_variation}).
\end{proof}

Equation~(\ref{eq:discrete_trajectory_variation}) is the finite-step counterpart of
Eq.~(\ref{eq:trajectory_variation}): a local sparse intervention is first pushed through the
remaining U-Net computation at its current step, then propagated through all subsequent denoising
steps to the final sample.

\paragraph{Interpretation.}
The theory separates three objects:
\begin{equation}
\boxed{
D_j
\quad\longrightarrow\quad
\psi_{j,b,t,c}
=
\kappa_t\sigma_bJ_{b,t,c}D_j
\quad\longrightarrow\quad
\Phi\,\psi_{j,b,t,c}.
}
\end{equation}
The first is the shared sparse feature coordinate, the second is its context-dependent local effect on
the denoising vector field, and the third is the effect after transport to the final sample.
SharedSAE therefore does not assume that one decoder column corresponds to a globally fixed
generative direction. Instead, it supplies a reusable hidden-space coordinate system whose local
causal effect is determined by the current block, timestep, state, conditioning, and downstream
network Jacobian.

\end{document}